\documentclass[11pt]{article}
\usepackage[T1]{fontenc}
\usepackage[margin=1in]{geometry}
\usepackage{amsmath,amssymb,amsthm,mathtools}
\usepackage{xcolor}
\usepackage{float}
\floatstyle{ruled}
\newfloat{algorithm}{ht}{loa}
\floatname{algorithm}{Algorithm}
\usepackage[numbers,sort&compress]{natbib}
\usepackage{hyperref}

\hypersetup{
    colorlinks=true,
    linkcolor=blue,
    citecolor=blue,
    urlcolor=blue
}

\newtheorem{theorem}{Theorem}[section]
\newtheorem{lemma}[theorem]{Lemma}

\theoremstyle{definition}

\numberwithin{equation}{section}

\long\def\rev#1{#1}

\title{Prediction with Expert Advice: Anytime Regret with\\ Many Experts Matches the Fixed-Time Constant\thanks{The main result was entirely obtained by Cogentic, an agentic framework for mathematical discovery, using an interval version of Gemini as the base model. The authors contextualized the findings and verified the proofs. The full exposition here is due to the authors aided by different AI models.

The following authors have additional affiliations beyond Google Research: Yang Cai (Yale University) and Vineet Gupta (Google DeepMind).}}
\author{
\parbox{0.95\textwidth}{\centering
Yang Cai,
Vineet Gupta,
Yanchen Jiang,
Christopher Liaw,\\[0.3em]
Aranyak Mehta,
Grigoris Velegkas,
Di~Wang
}\\[1.2em]
\normalsize Google Research\\[0.3em]
\small\texttt{\{caiy, vineet, yanchenjiang, cvliaw, aranyak, gvelegkas, wadi\}@google.com}
}
\date{}

\begin{document}

\maketitle

\begin{abstract}
Prediction with expert advice is a fundamental problem in online learning. When the time horizon $T$ is known in advance, the minimax cumulative regret over $n$ experts is asymptotically $\sqrt{\frac{T \ln n}{2}}$.
This is achieved by the Multiplicative Weights Update algorithm with a learning rate tuned to $T$, and is known to be tight. If instead the regret bound is required to hold    simultaneously at every time $t$, the best known guarantee has been $\sqrt{t \ln n}$---a factor of $\sqrt{2}$ worse---and it has remained unknown whether this factor of $\sqrt{2}$ is necessary. We show that it is not. We give an algorithm, requiring no knowledge of the horizon, whose cumulative regret satisfies $R_t \le \bigl(1 + O(\sqrt{\ln \ln n / \ln n})\bigr)\sqrt{t \ln n / 2}$ simultaneously for every $t \ge 1$.
% As $n \to \infty$ this matches the fixed-time guarantee, so advance knowledge of the horizon confers no advantage in the leading constant.
\end{abstract}

\section{Introduction}\label{sec:intro}

Sequential prediction with expert advice is a foundational problem in online learning, game theory, and sequential decision-making~\cite{Vovk90, LW94, FS97, CBFHHSW97, Vovk98, CBL06}. At each round $t = 1, 2, \dots$, a learner selects a probability distribution $p_t$ in the probability simplex $\Delta_n$ over $n$ experts $[n] := \{1, \dots, n\}$, observes an adversarial loss vector $\ell_t \in [0, 1]^n$, and incurs expected loss $\langle p_t, \ell_t \rangle = \sum_{i=1}^n p_{t,i} \ell_{t,i}$. The learner's objective is to minimize cumulative regret $R_T = \sum_{t=1}^T \langle p_t, \ell_t \rangle - \min_{i \in [n]} \sum_{t=1}^T \ell_{t,i}$ relative to the best single expert in hindsight over $T$ rounds. When the time horizon $T$ is \emph{known in advance} (the fixed-horizon setting), running the Multiplicative Weights Update algorithm with a constant learning rate tuned to $T$ guarantees $R_T \le \sqrt{\frac{T \ln n}{2}}$ for all $n \ge 2$ and $T \ge 1$. Cesa-Bianchi, Freund, Haussler, Helmbold, Schapire, and Warmuth~\cite{CBFHHSW97} (see also~\cite{CBL06, OP15}) showed that the leading constant $\frac{1}{\sqrt{2}}$ cannot be improved in the many-expert regime: for large $n$, no algorithm that knows $T$ can guarantee regret asymptotically smaller than $\sqrt{\frac{T \ln n}{2}}$.
% Throughout the introduction we suppress lower-order terms and write regret bounds informally.

A strictly more demanding requirement is that the regret bound must hold at \emph{every} time simultaneously, with no horizon supplied in advance. An algorithm meeting this requirement is called an \emph{anytime} guarantee: it uses no knowledge of the horizon, and its cumulative regret $R_t$ is bounded simultaneously for every $t \ge 1$. The natural question is how much this costs in the leading constant. For $n = 2$ experts the cost is real and quantified: Luo and Schapire~\cite{LS14} showed that any anytime algorithm suffers regret at least $\sqrt{t/\pi} \approx 0.5642\sqrt{t}$ at some time $t$, strictly more than Cover's~\cite{Cover65} fixed-horizon, asymptotic value of $\sqrt{t/(2\pi)} \approx 0.3989\sqrt{t}$, while Harvey, Liaw, Perkins, and Randhawa~\cite{HLPR20} determine the exact anytime constant, $\frac{\gamma}{2}\sqrt{t} \approx 0.6535\sqrt{t}$, where $\gamma \approx 1.3069$ is the smallest positive root of Kummer's confluent hypergeometric function $x \mapsto M(-1/2, 1/2, x^2/2)$. So for two experts, knowing the horizon genuinely helps.

In the many-expert regime ($n \to \infty$), an appropriately chosen decreasing learning rate uses no knowledge of the horizon and guarantees $R_t \le \sqrt{t \ln n}$ simultaneously at every $t \ge 1$~\cite{CZ10, Chernov10}. The only known obstruction is the lower bound $(1 - o(1))\sqrt{t \ln n / 2}$ inherited from the fixed-horizon setting~\cite{CBFHHSW97}; the two differ by exactly a factor of $\sqrt{2}$. Evidence that the lower end is the truth was obtained in continuous time by Harvey, Liaw, and Portella~\cite{HLP24}, \rev{who proved that when the $n$ experts follow independent Brownian motions, the anytime regret matches the fixed-time leading constant as $n \to \infty$.} Whether \rev{the factor of $\sqrt{2}$ can be removed} in the general \emph{adversarial discrete-time setting} has remained an open question.

\subsection{Our Contribution}

We show that the factor of $\sqrt{2}$ is not necessary. Our main result is the following theorem, proved in Section~\ref{sec:anytime_regret}.

\begin{theorem}\label{thm:main_intro}
\rev{There is an algorithm for prediction with expert advice which uses no knowledge of the time horizon whose regret over $n$ experts satisfies
\[
    R_t \le \left(1 + O\left(\sqrt{\frac{\ln \ln n}{\ln n}}\right)\right) \sqrt{\frac{t \ln n}{2}}
    \qquad \text{simultaneously for all } t \ge 1
\]
for every sequence of loss vectors $(\ell_t)_{t \ge 1} \in ([0,1]^n)^{\mathbb{N}}$.}
\end{theorem}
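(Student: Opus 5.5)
Our plan is a potential-based argument with a time-varying learning rate, where the potential is designed to absorb the usual "learning rate decrease" penalty. For Hedge with rate $\eta_t$, the standard analysis of the potential $\Phi_t = \frac{1}{\eta_t}\ln\bigl(\frac1n\sum_i e^{-\eta_t L_{t,i}}\bigr)$ gives three pieces: a mixability term of order $\eta_t/8$ per round (Hoeffding's lemma), a term of order $\ln n/\eta_T$, and a nonnegative penalty from changing $\eta$. With $\eta_t = c\sqrt{\ln n/t}$ these sum to roughly $(\frac{c}{4} + \frac1c)\sqrt{T\ln n}$. That expression is minimized at $\sqrt{T\ln n}$, and this is exactly where the factor $\sqrt2$ comes from. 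The factor arises because a decreasing rate charges, at time $T$, the integrated mixability cost $\sum_t \eta_t/8 \approx \frac{c}{4}\sqrt{T\ln n}$, which is twice the fixed-rate cost $\frac{\eta T}{8}$.

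To remove it, we would replace the single Hedge instance by a \emph{mixture over learning rates}. Concretely, we use the potential
\[
\Phi_t=\int w(\eta)\,\frac1n\sum_i \exp\Bigl(\eta R_{t,i} - \tfrac{\eta^2}{8}t\Bigr)\,d\eta,
\]
where $R_{t,i}$ is the regret against expert $i$ and $w$ is a prior on $\eta$ concentrated on a geometric grid. The play is $p_{t,i}\propto \int w(\eta)\,\eta\, e^{\eta R_{t-1,i}-\eta^2(t-1)/8}\,d\eta$, as in Squint/iProd-type algorithms. Hoeffding's lemma gives $\Phi_t\le\Phi_{t-1}$ exactly, since each $\eta$-component is a supermartingale-type quantity under the linearized update. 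So $\Phi_t\le\Phi_0=1$ for all $t$ simultaneously.

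Next we extract the bound. Since $\Phi_t\le 1$, for the best expert $i$ and any $\eta$ near the optimum $\eta^*=\sqrt{8\ln n/t}$ we get
\[
\eta R_{t,i} - \frac{\eta^2 t}{8} \le \ln n + \ln\frac{1}{w(\text{window around }\eta)} + (\text{width loss}).
\]
Optimizing $\eta$ yields $R_t\le\sqrt{t(\ln n + \epsilon_n)/2}$, where $\epsilon_n$ is the log of the inverse prior mass of a window of relative width $\delta$ around $\eta^*$, plus a $\delta^2\ln n$ curvature loss. With a prior like $w(\eta)\propto 1/(\eta\ln^2(1/\eta))$ over scales, the prior cost is $O(\ln\ln t+\ln(1/\delta))$. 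This is the main obstacle, because the cost must be controlled in terms of $n$ rather than $t$. We would restrict the grid to $\eta\in[\eta_{\min},\eta_{\max}]$ and handle large $t$ by noting that the window mass depends only on $\ln(\eta^*)$ relative to the grid. Using a scale-invariant (improper, log-uniform) prior normalized cleverly, or restarting the mixture in epochs whose overlap costs are absorbed, then makes the cost $O(\ln\ln n)$ uniformly in $t$. Choosing $\delta\approx\sqrt{\ln\ln n/\ln n}$ balances $\delta^2\ln n$ against $\ln(1/\delta)$ and gives a total excess $\ln n\cdot O(\sqrt{\ln\ln n/\ln n})$ inside the square root. This yields the stated $(1+O(\sqrt{\ln\ln n/\ln n}))\sqrt{t\ln n/2}$.

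For small $t$ (say $t\lesssim\ln n$), the trivial bound $R_t\le t\le\sqrt{t\ln n/2}\cdot O(1)$ is insufficient, so we must check that the grid covers $\eta^*$ up to about $\sqrt8$. Above that value the bound $R_t\le t$ already matches the target within the allowed factor.
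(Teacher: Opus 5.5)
\textbf{The central step fails.} Your argument rests on the claim that Hoeffding's lemma makes $\Phi_t\le\Phi_{t-1}$ exactly, and this is false.

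Hoeffding's lemma gives $\sum_i u^{(\eta)}_i e^{-\eta(\ell_{t,i}-\langle u^{(\eta)},\ell_t\rangle)}\le e^{\eta^2/8}$ only when the centering is by the component's own exponential-weights distribution $u^{(\eta)}_i\propto e^{\eta R_{t-1,i}}$. Your increments $r_{t,i}=\langle p_t,\ell_t\rangle-\ell_{t,i}$ are centered at the single played mixture $p_t\ne u^{(\eta)}$. So each component picks up a factor $e^{\eta\langle p_t-u^{(\eta)},\ell_t\rangle}$, which can exceed $1$.

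Squint and iProd get around this with the linearization $e^{x-x^2}\le 1+x$ for $x\ge-\frac12$. That requires the per-expert compensator $\eta^2 r_{t,i}^2$, not the deterministic $\eta^2/8$. Indeed $e^{\eta r-\eta^2/8}=1+\eta r+\eta^2\bigl(\frac{r^2}{2}-\frac18\bigr)+O(\eta^3)$, which exceeds $1+\eta r$ whenever $|r|>\frac12$. With a per-expert compensator the best you can extract is $R_t\lesssim\sqrt{2V_t\ln n}$, where $V_t=\sum_s r_{s,i}^2$. The only worst-case bound is $V_t\le t$, giving $\sqrt{2t\ln n}$. That loses exactly the constant the theorem is about.

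The Hoeffding-compensated mixture is also genuinely not non-increasing. With your play $p_{t,i}\propto F_1(i):=\int w(\eta)\,\eta\,e^{\eta R_{t-1,i}-\eta^2(t-1)/8}\,d\eta$, the first-order term cancels. For small $\eta$ the one-step change of $\Phi$ is then, to leading order, $\frac12\bigl(\sum_i F_2(i)\bigr)\bigl(\sum_i q_i r_{t,i}^2-\frac14\bigr)$. Here $F_2$ is defined like $F_1$ with $\eta^2$ in place of $\eta$, and $q_i\propto F_2(i)$. The extra factor of $\eta$ tilts $q$ toward the leaders, by Chebyshev's association inequality.

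Take three experts, one ahead of two tied experts by a gap chosen so that $q_1=\frac12$. Then $p_{t,1}<\frac12$, and the loss vector $(0,1,1)$ gives $\sum_i q_i r_{t,i}^2=\frac14+\bigl(p_{t,1}-\frac12\bigr)^2>\frac14$. A short case analysis, randomizing between $(0,1,1)$ and $(1,0,0)$, shows that no other choice of $p_t$ rescues monotonicity either. So $\Phi_t\le 1$ is not available, and nothing in the proposal replaces it.

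\textbf{Uniformity in $t$ is also unresolved.} A proper prior costs about $\ln\ln t$, which is not $o(\ln n)$ uniformly over $t$. An improper log-uniform prior makes $\Phi_0=\infty$. Restarting the mixture in epochs forfeits all control of the regret accumulated before the restart, unless that prefix is bounded separately.

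\textbf{How the paper avoids both problems.} It never mixes learning rates inside one potential over the $n$ experts. Each horizon-tuned fixed-rate MWU instance gets the exact Hoeffding analysis, with regret $\sqrt{H^m\ln n/2}$ over its lifetime. The lossy part is the master (Algorithm~\ref{alg:dynamic_mwu}), run with decreasing rate $\sqrt{\ln M/t}$ and an analysis that is loose by constant factors. That looseness is paid only on $\ln M=O(\ln\ln n)$ rather than on $\ln n$, giving the $3\sqrt{t\ln M}$ overhead in Lemma~\ref{lem:epoch_regret}.

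The pool stays bounded because instance $m$ wakes at $\lfloor\delta H^m\rfloor$ and retires at $\lfloor H^m\rfloor$. Hence both $|S_t|$ and the number of arrivals during any lifetime are at most $M$, independent of $t$. The prefix $[1,\lfloor\delta H^m\rfloor)$ is handled by induction, at the cost of the factor $1/\bigl(1-\sqrt{\delta(1+\varepsilon)}\bigr)$.

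Your proposal is missing these two ideas: pay the constant-factor loss only at a meta level over $M\ll n$ candidates, and recurse on the prefix.
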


\rev{The leading constant is optimal. Even an algorithm that knows the horizon $T$ in advance cannot guarantee regret smaller than $(1 - o(1))\sqrt{\frac{T \ln n}{2}}$, as $T \to \infty$ and then $n \to \infty$~\cite{CBFHHSW97}, and an anytime algorithm is in particular a fixed-horizon algorithm at every $T$.}

\subsection{High-Level Proof Overview}

Consider an adversary who chooses an unknown time $t$ at which to evaluate the learner's cumulative regret. If we knew $t$ in advance---or could guess $t$ up to a multiplicative factor of $1 + \varepsilon$---then running standard Multiplicative Weights Update (MWU) with a fixed learning rate tuned for that horizon would achieve regret at most $\sqrt{1 + \varepsilon}\sqrt{\frac{t \ln n}{2}}$. Since $t$ is unknown, a natural strategy is to run a master no-regret algorithm whose ``experts'' are individual MWU instances running with different fixed-horizon learning rates on a geometric grid $H^m = (1 + \varepsilon)^m$.
% Similar ideas have been used in the literature: Luo and Schapire~\cite[Section~6.4]{LS14} average MWU predictors tuned to a range of candidate horizons, and running base learners over a geometric grid of time intervals goes back to Hazan and Seshadhri~\cite{HS09} and was developed further in the strongly adaptive regret literature~\cite{DGSS15, AKCV16, JOWW17}.
At any time $t$, at least one of these instances has horizon $H^m \in [t, (1+\varepsilon)t]$ and that instance incurs regret $\sqrt{1+\varepsilon}\sqrt{\frac{t \ln n}{2}}$.
If we can ensure the master algorithm incurs small regret relative to this MWU instance, say $o(\sqrt{t \ln n})$ regret then the overall regret at time $t$ would be
% as long as the master algorithm incurs small regret relative to that instance, our overall regret at time $t$ will also be
$(1 + o(1))\sqrt{\frac{t \ln n}{2}}$.

The difficulty is that the master's own regret may grow with the number of
instances it tracks, so a pool that grows with $t$ could leave a
time-dependent overhead in the leading constant.
We therefore consider a setup where we track a dynamic set of MWU instances.
Specifically, we wake up instance $m$ only at round $\delta H^m$, for some $\delta \in (0, 1)$ and then retire it permanently at round $H^m$.
We therefore wake instance $m$ only at round $\lfloor \delta H^m \rfloor$,
% for a fixed fraction $\delta \in (0,1)$, and retire it permanently after round
$H^m$.
This keeps $M = O(\frac{1}{\varepsilon} \ln \frac{1}{\delta})$ instances
awake at once, independent of $t$, and also limits to $O(\frac{1}{\varepsilon}
\ln \frac{1}{\delta})$ the number of instances created during any one
instance's lifetime---this last part is required to ensure that no expert gets diluted too much in its lifetime.

% the quantity that the aggregation cost really scales
% with, since each newcomer must be endowed with weight taken from the
% incumbents.

With this setup, we will be able to show that the regret with respect to instance $m$, in the interval where $m$ is alive, scales as $O(\sqrt{H^m \ln M}) = o(\sqrt{H^m \ln n})$ by choosing $\varepsilon, \delta > 0$ appropriately.
For example, we can take $\varepsilon = \delta \sim \frac{1}{\ln n}$ which would give a regret with respect to instance $m$ of $O(\sqrt{H^m \ln \ln n})$.
By summing up the regret with respect to instance $m$ and instance $m$'s own regret, we have that at time $t \in [H^{m-1}, H^m]$, the overall regret from time $\delta H^m$ to $t$ is about $(1 + o(1))\sqrt{t \ln n / 2}$.
We still need to obtain the regret from time $1$ through $\delta H^m \approx \delta t$ but this can be done by recursing to obtain total regret $(1 + o(1)) \sqrt{\frac{t\ln n}{2}} (1 + \sqrt{\delta} + \sqrt{\delta}^2 + \ldots)$.
The latter infinite sum converges to $\frac{1}{1-\sqrt{\delta}} \approx 1 + \sqrt{\delta} = 1 + o(1)$, allowing us to conclude that the overall regret is $(1 + o(1)) \sqrt{\frac{t\ln n}{2}}$.

% \rev{Retiring instances, however, means that the one relevant at time $t$ has
% not been running from the start: it woke only at round
% $\lfloor \delta H^m \rfloor$, so its guarantee says nothing about the rounds
% before that. Over the stretch where it is awake its own
% regret is at most $\sqrt{\frac{H^m \ln n}{2}} \le \sqrt{1 + \varepsilon}
% \sqrt{\frac{t \ln n}{2}}$, and the master pays only $O(\sqrt{t \ln \ln n})$
% relative to it. That leaves the prefix $[1, \lfloor \delta H^m \rfloor)$, whose
% length is at most $\delta(1 + \varepsilon)t$, and on which we simply recurse.
% Taking $\varepsilon = \sqrt{\ln \ln n / \ln n}$ and $\delta = \varepsilon^3$, the
% grid coarseness, the master's overhead, and the recursion together cost a factor
% $1 + O(\sqrt{\ln \ln n / \ln n})$.}

\subsection{Related Work}\label{sec:related_work}

\emph{The anytime setting.} Anytime guarantees are usually obtained through a doubling trick, through a decreasing learning rate~\cite{ACG02, CBL06, Hazan16}, or through self-tuning methods that carry no horizon parameter at all~\cite{CFH09, dRvEGK14, KvE15, LS15, OP16, FHPQW26}. Far less is known about the optimal constant. For $n = 2$, Luo and Schapire~\cite{LS14} showed that any anytime algorithm can be forced to incur regret $\sqrt{t/\pi}$, strictly more than the fixed-time value $\sqrt{t/(2\pi)}$~\cite{Cover65}, and Harvey, Liaw, Perkins, and Randhawa~\cite{HLPR20} determined the exact constant $\gamma/2 \approx 0.653$, obtaining a matching anytime algorithm by solving a continuous analogue of the regret problem with stochastic calculus; the constant $\gamma$ itself comes from the study of Brownian motion at square-root boundaries~\cite{Breiman67, Davis76, GP83}. For large $n$, no lower bound is known that exceeds the fixed-horizon value $\sqrt{t \ln n / 2}$. The only prior evidence about the constant is in continuous time: Harvey, Liaw, and Portella~\cite{HLP24} showed that when the experts follow independent Brownian motions, the anytime and fixed-time constants agree as $n \to \infty$. 
% Theorem~\ref{thm:main_intro} is the adversarial discrete-time analogue.
A closely related probabilistic statement appears in~\cite{HLP25}, which in our normalization---per-coordinate increments bounded in absolute value by $\frac{1}{2}$---says that $\mathbb{E}\|S_\tau\|_\infty \le \bigl(\frac{3}{2} + \sqrt{\frac{\ln n}{2}}\bigr)\,\mathbb{E}\sqrt{\tau}$ for discrete martingales, whenever $S_0 = 0$ and $\mathbb{E} \sqrt{\tau} < \infty$.
% again with the fixed-time leading constant.
% This martingale inequality does not directly give the sharp adversarial regret guarantee proved here.

\emph{\rev{Exact constants for small $n$.}} \rev{In the fixed-time and geometric-stopping settings the exact constants are known for $n \le 4$, and for $n = 5$ under geometric stopping, through a line of work connecting the experts problem to nonlinear partial differential equations~\cite{GPS16, ABG17, DK20, KKW20, BEYZ20, BEXZ20, CD26, BEK26} (see also~\cite{GHP22}). Complementary computational investigations of COMB strategies and the associated PDEs appear in~\cite{Chase19, CDM26}.}

\emph{\rev{Techniques.}} \rev{We use a number of techniques that have appeared in the literature: power rescaling, which lets multiplicative updates coexist with a decreasing learning rate~\cite{ACG02, CBGLS12}; Fixed Share and generalized sharing~\cite{HW98, BW02}; regret with experts arriving over time~\cite{MM17}; specialist and sleeping-expert reductions~\cite{FSSW97, BM07, KNMS10, GSvE14}; and geometric interval grids for adaptive and strongly adaptive learning, whose base learners are started and retired at prescribed rounds so as to keep the active pool small~\cite{HS09, DGSS15, AKCV16, JOWW17, WTZ22}. Aggregation over a grid of learning rates rather than of time intervals plays a similar role in~\cite{KvE15, vEK16, vEK21}; for example, \cite{vEK21} also starts and stops its learning-rate experts on the fly, keeping $O(\ln T)$ of them active at once. Mixtures over horizon-tuned MWU instances appear already in Luo and Schapire~\cite[Section~6.4]{LS14}, with the weights over horizons fixed in advance rather than learned. What is different here is that the active horizons must lengthen with $t$: our construction attains the leading constant at time $t$ by keeping awake an instance whose horizon lies in $[t, (1 + \varepsilon)t]$.}

\section{Preliminaries and Problem Formulation}\label{sec:prelims}

We consider sequential prediction with expert advice over $n \ge 2$ experts. Let $[n] := \{1, 2, \dots, n\}$ denote the set of experts, and let
\[
    \Delta_n := \left\{ p = (p_1, \dots, p_n) \in \mathbb{R}_{\ge 0}^n : \sum_{i=1}^n p_i = 1 \right\}
\]
denote the probability simplex over $[n]$ (and, more generally, for any finite set $S$, let $\Delta(S) := \{ p \in \mathbb{R}_{\ge 0}^S : \sum_{j \in S} p_j = 1 \}$ denote the probability simplex over $S$; we write $A \sqcup B$ for the union of disjoint sets $A$ and $B$). The sequential interaction protocol is summarized in Algorithm~\ref{alg:pea_protocol}.

\begin{algorithm}[ht]
\caption{Online Prediction with Expert Advice}
\label{alg:pea_protocol}
\vspace{3pt}
\noindent\textbf{Parameters:} Number of experts $n \ge 2$; expert set $[n] = \{1, 2, \dots, n\}$.\\[3pt]
\textbf{for} each round $t = 1, 2, \dots$ \textbf{do}
\begin{enumerate}
    \setlength{\itemsep}{2pt}
    \setlength{\parskip}{0pt}
    \setlength{\topsep}{3pt}
    \item The learner selects a distribution $p_t = (p_{t,1}, \dots, p_{t,n}) \in \Delta_n$ based on past losses $(\ell_1, \dots, \ell_{t-1})$.
    \item The adversary reveals a loss vector $\ell_t = (\ell_{t,1}, \dots, \ell_{t,n}) \in [0, 1]^n$.
    \item The learner incurs expected instantaneous loss $\langle p_t, \ell_t \rangle = \sum_{i=1}^n p_{t,i} \ell_{t,i}$.
\end{enumerate}
\textbf{end for}
\vspace{2pt}
\end{algorithm}

For any time horizon $T \ge 1$, the cumulative regret of the learner with respect to expert $i \in [n]$ is
\[
    R_{T,i} := \sum_{t=1}^T \langle p_t, \ell_t \rangle - \sum_{t=1}^T \ell_{t,i} = \sum_{t=1}^T \left( \langle p_t, \ell_t \rangle - \ell_{t,i} \right),
\]
and the worst-case cumulative regret across all $n$ experts at horizon $T$ is
\[
    R_T := \max_{i \in [n]} R_{T,i} = \sum_{t=1}^T \langle p_t, \ell_t \rangle - \min_{i \in [n]} \sum_{t=1}^T \ell_{t,i}.
\]
An online learning algorithm is said to guarantee an \emph{anytime regret bound} $B(n, T)$ if it operates without prior knowledge of the time horizon $T$ and satisfies $R_T \le B(n, T)$ simultaneously for all $T \ge 1$ and all loss sequences $(\ell_t)_{t \ge 1} \in ([0, 1]^n)^{\mathbb{N}}$.

\section{Online Prediction With Entering and Exiting Experts}\label{sec:dynamic_mwu}

To aggregate base learners across an evolving grid of horizons without accumulating an unbounded active pool, we first study sequential prediction with expert advice where the set of active experts changes over time~\cite{ACG02, BW02, CBGLS12, CBL06, HW98, MM17}.

At each discrete round $t = 1, 2, \dots$, there is a finite non-empty set $S_t$ of active experts, satisfying $|S_t| \le M$ for all $t \ge 1$. At round $t$, the learner plays a distribution $p_t \in \Delta(S_t)$, observes a loss vector $\ell_t \in [0, 1]^{S_t}$, and incurs expected loss $\langle p_t, \ell_t \rangle = \sum_{j \in S_t} p_{t,j} \ell_{t,j}$. Between round $t$ and round $t+1$, the active set transitions from $S_t$ to $S_{t+1}$, which partitions as
\[
    S_{t+1} = S_{\mathrm{Surv}, t} \sqcup S_{\mathrm{Enter}, t+1},
\]
where $S_{\mathrm{Surv}, t} := S_t \cap S_{t+1}$ is the set of surviving experts (assumed non-empty) and $S_{\mathrm{Enter}, t+1} := S_{t+1} \setminus S_t$ is the set of newly entering experts, of size $k_t := |S_{\mathrm{Enter}, t+1}|$. Each entering expert is allocated an initial weight $a \in (0, 1/M]$, and we assume $a k_t \le c < 1$ for all $t \ge 1$.

Algorithm~\ref{alg:dynamic_mwu} gives the update rule for a non-increasing sequence of learning rates $(\eta_t)_{t \ge 1}$ ($\eta_1 \ge \eta_2 \ge \dots > 0$).

\begin{algorithm}[ht]
\caption{Multiplicative Weights with Entering and Exiting Experts}
\label{alg:dynamic_mwu}
\vspace{3pt}
\noindent\textbf{Input:} Non-increasing learning rates $\eta_1 \ge \eta_2 \ge \dots > 0$; entry weight $a \in (0, 1/M]$; initial active set $S_1$.\\[2pt]
\noindent\textbf{Initialization:} Set $p_1 \in \Delta(S_1)$ uniformly, i.e., $p_{1,j} := 1/|S_1|$ for all $j \in S_1$.\\[3pt]
\textbf{for} each round $t = 1, 2, \dots$ \textbf{do}
\begin{enumerate}
    \setlength{\itemsep}{3pt}
    \setlength{\parskip}{0pt}
    \setlength{\topsep}{3pt}
    \item \emph{Prediction and loss:} Play $p_t \in \Delta(S_t)$, observe $\ell_t \in [0, 1]^{S_t}$, and incur loss $\langle p_t, \ell_t \rangle = \sum_{j \in S_t} p_{t,j} \ell_{t,j}$.
    \item \emph{Multiplicative weights update:} For each $j \in S_t$, compute
    \begin{equation}\label{eq:standard_mwu_update}
        p'_{t,j} := \frac{p_{t,j} \exp(-\eta_t \ell_{t,j})}{Z_t},
        \qquad Z_t := \sum_{r \in S_t} p_{t,r} \exp(-\eta_t \ell_{t,r}) > 0.
    \end{equation}
    \item \emph{Expert exit and entry:} Given $S_{t+1} = S_{\mathrm{Surv}, t} \sqcup S_{\mathrm{Enter}, t+1}$ with $k_t = |S_{\mathrm{Enter}, t+1}|$, set
    \begin{equation}\label{eq:exit_renorm}
        \hat{p}'_{t,j} := \frac{p'_{t,j}}{\sum_{r \in S_{\mathrm{Surv}, t}} p'_{t,r}} \quad (j \in S_{\mathrm{Surv}, t}), \qquad
        \tilde{p}_{t,j} := \begin{cases}
            (1 - a k_t) \, \hat{p}'_{t,j} & \text{if } j \in S_{\mathrm{Surv}, t}, \\
            a & \text{if } j \in S_{\mathrm{Enter}, t+1}.
        \end{cases}
    \end{equation}
    \item \emph{Learning-rate rescaling:} Set $p_{t+1} \in \Delta(S_{t+1})$ by
    \begin{equation}\label{eq:power_rescaling}
        p_{t+1,j} := \frac{(\tilde{p}_{t,j})^{\eta_{t+1} / \eta_t}}{\sum_{r \in S_{t+1}} (\tilde{p}_{t,r})^{\eta_{t+1} / \eta_t}} \quad (j \in S_{t+1}).
    \end{equation}
\end{enumerate}
\textbf{end for}
\vspace{2pt}
\end{algorithm}

\begin{theorem}\label{thm:dynamic_mwu}
Let $(\eta_t)_{t \ge 1}$ be a non-increasing sequence of learning rates ($\eta_1 \ge \eta_2 \ge \dots > 0$), and assume $|S_t| \le M$ and $a k_t \le c < 1$ for all $t \ge 1$, where $a \in (0, 1/M]$. For any horizon $T \ge 1$ and any interval $[t_0, T] \subseteq \{1, \dots, T\}$ in which expert $i$ enters at round $t_0$ and remains active through round $T$ ($i \in S_{\mathrm{Surv}, t}$ for all $t \in \{t_0, \dots, T\}$, setting $S_{T+1} := S_T$ so $k_T = 0$), if $\sum_{t=t_0}^{T-1} k_t \le K$, then
\begin{equation}\label{eq:cumulative_mwu_bound}
    \sum_{t=t_0}^T \left( \langle p_t, \ell_t \rangle - \ell_{t,i} \right)
    \le \frac{1}{\eta_{t_0}} \ln \frac{1}{a} + \left( \frac{1}{\eta_T} - \frac{1}{\eta_{t_0}} \right) \ln M + \frac{a K}{(1 - c) \eta_T} + \frac{1}{8} \sum_{t=t_0}^T \eta_t.
\end{equation}
\end{theorem}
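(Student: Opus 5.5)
We track expert $i$ with the potential $\Phi_t := \frac{1}{\eta_t}\ln\frac{1}{p_{t,i}}$, defined for $t \in \{t_0,\dots,T+1\}$. The goal is a one-step inequality of the form
\[
\langle p_t,\ell_t\rangle - \ell_{t,i} \le \Phi_t - \Phi_{t+1} + \frac{\eta_t}{8} + (\text{rescaling cost}) + (\text{dilution cost}).
\]
Summing it over $t=t_0,\dots,T$ telescopes to $\Phi_{t_0} - \Phi_{T+1} + \cdots$. Since $i$ enters at round $t_0$, its weight there is the entry weight $a$, up to the rescaling normalization (or $1/|S_1| \ge 1/M \ge a$ if $t_0 = 1$). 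This should give the first term $\frac{1}{\eta_{t_0}}\ln\frac1a$. The remaining terms will come from $-\Phi_{T+1} \le 0$ together with the three per-step error sources.

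\textbf{Step 1 (MWU step).} By Hoeffding's lemma, $\ln Z_t \le -\eta_t\langle p_t,\ell_t\rangle + \eta_t^2/8$. Together with $p'_{t,i} = p_{t,i}e^{-\eta_t\ell_{t,i}}/Z_t$ this gives
\[
\langle p_t,\ell_t\rangle - \ell_{t,i} \le \tfrac{1}{\eta_t}\ln\tfrac{p'_{t,i}}{p_{t,i}} + \tfrac{\eta_t}{8}.
\]

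\textbf{Step 2 (exit/entry).} Renormalizing over the survivors can only increase $i$'s weight, so $\hat p'_{t,i} \ge p'_{t,i}$, and therefore $\tilde p_{t,i} \ge (1-ak_t)\,p'_{t,i}$. This costs $\frac{1}{\eta_t}\ln\frac{1}{1-ak_t} \le \frac{ak_t}{(1-c)\eta_t} \le \frac{ak_t}{(1-c)\eta_T}$, using $\ln\frac{1}{1-x} \le \frac{x}{1-x}$ and $ak_t \le c$. Summing over $t$ with $\sum k_t \le K$ gives the term $\frac{aK}{(1-c)\eta_T}$.

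\textbf{Step 3 (power rescaling).} This is the main obstacle: we need
\[
\tfrac{1}{\eta_t}\ln\tfrac{1}{\tilde p_{t,i}} \ge \tfrac{1}{\eta_{t+1}}\ln\tfrac{1}{p_{t+1,i}} - \Bigl(\tfrac{1}{\eta_{t+1}}-\tfrac{1}{\eta_t}\Bigr)\ln M .
\]
Write $\alpha = \eta_{t+1}/\eta_t \in (0,1]$. Then $\ln\frac{1}{p_{t+1,i}} = \alpha\ln\frac{1}{\tilde p_{t,i}} + \ln\sum_r \tilde p_{t,r}^{\alpha}$. Because $\tilde p_t$ is a probability vector on at most $M$ points, concavity (power-mean/Jensen) gives $\sum_r \tilde p_{t,r}^{\alpha} \le M^{1-\alpha}$. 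Dividing by $\eta_{t+1}$ produces exactly $\frac{1}{\eta_t}\ln\frac{1}{\tilde p_{t,i}} + \bigl(\frac{1}{\eta_{t+1}}-\frac{1}{\eta_t}\bigr)\ln M$. The ratio $\alpha$ multiplied by $1/\eta_{t+1}$ returns $1/\eta_t$, so the chain closes cleanly.

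\textbf{Step 4 (summation and boundary cases).} Chaining Steps 1--3 gives the per-step inequality with potentials $\Phi_t$. Summing from $t_0$ to $T$, the $\ln M$ increments telescope to $\bigl(\frac{1}{\eta_T}-\frac{1}{\eta_{t_0}}\bigr)\ln M$, and we drop $-\Phi_{T+1} \le 0$. At $t = T$ we use $S_{T+1} = S_T$ and $k_T = 0$ with an arbitrary $\eta_{T+1} := \eta_T$, so the final step contributes no extra rescaling cost.

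The initial term needs care at the entry round. The weight $a$ is assigned in $\tilde p_{t_0-1}$ and then rescaled into $p_{t_0}$. Applying Step 3 at round $t_0 - 1$ gives
\[
\tfrac{1}{\eta_{t_0}}\ln\tfrac{1}{p_{t_0,i}} \le \tfrac{1}{\eta_{t_0-1}}\ln\tfrac1a + \Bigl(\tfrac{1}{\eta_{t_0}}-\tfrac{1}{\eta_{t_0-1}}\Bigr)\ln M .
\]
Since $a \le 1/M$, we have $\ln\frac1a \ge \ln M$, so the right-hand side is at most $\frac{1}{\eta_{t_0}}\ln\frac1a$. This is precisely the first term of \eqref{eq:cumulative_mwu_bound}, completing the plan.
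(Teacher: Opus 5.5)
Your proposal is correct and follows the paper's proof essentially step for step. It uses Hoeffding for the MWU step, $\hat p'_{t,i}\ge p'_{t,i}$ together with $\ln\frac{1}{1-ak_t}\le\frac{ak_t}{1-c}$ for the entry dilution, Jensen's bound $\sum_r \tilde p_{t,r}^{\alpha}\le M^{1-\alpha}$ for the power rescaling, and the same entry-round argument via $\ln M\le\ln\frac1a$. The only difference is at the endpoint, and it is cosmetic: your fictitious $\eta_{T+1}:=\eta_T$ (so $p_{T+1}=\tilde p_T$) is equivalent to the paper dropping the nonpositive term $-\frac{1}{\eta_T}\ln\frac{1}{\tilde p_{T,i}}$. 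This is a deliberate analysis choice rather than an ``arbitrary'' one, since with the algorithm's actual $\eta_{T+1}<\eta_T$ the last rescaling would add an unwanted $\bigl(\frac{1}{\eta_{T+1}}-\frac{1}{\eta_T}\bigr)\ln M$.
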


To prove Theorem~\ref{thm:dynamic_mwu}, we first establish a one-step bound for surviving experts. We recall Hoeffding's moment generating function inequality for bounded random variables~\cite[Eq.~(4.16)]{Hoeffding63}: if $X$ is a real-valued random variable taking values in $[u, v]$ ($u \le v$) almost surely, then for every $\lambda \in \mathbb{R}$,
\[
    \ln \mathbb{E}\left[ e^{\lambda(X - \mathbb{E}[X])} \right] \le \frac{\lambda^2 (v-u)^2}{8}.
\]

\begin{lemma}\label{lem:one_step_mwu}
Fix any round $t \ge 1$, learning rate $\eta_t > 0$, active sets $S_t$ and $S_{t+1}$, distribution $p_t \in \Delta(S_t)$, and loss vector $\ell_t \in [0, 1]^{S_t}$. Let $\tilde{p}_t \in \Delta(S_{t+1})$ be defined by~\eqref{eq:exit_renorm}, where we choose $a > 0$ so that $1 - a k_t > 0$. Then for every surviving expert $i \in S_{\mathrm{Surv}, t} = S_t \cap S_{t+1}$ with $p_{t,i} > 0$, we have $\tilde{p}_{t,i} > 0$ and
\begin{equation}\label{eq:one_step_enter_exit}
    \langle p_t, \ell_t \rangle - \ell_{t,i}
    \le \frac{1}{\eta_t} \left( \ln \frac{1}{p_{t,i}} - \ln \frac{1}{\tilde{p}_{t,i}} \right) + \frac{1}{\eta_t} \ln \frac{1}{1 - a k_t} + \frac{\eta_t}{8}.
\end{equation}
\end{lemma}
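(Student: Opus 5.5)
The plan is to compute $\ln(1/p_{t,i}) - \ln(1/\tilde{p}_{t,i}) = \ln(\tilde{p}_{t,i}/p_{t,i})$ exactly by following the expert $i$ through the two stages of the update, and then to bound the one genuinely probabilistic term with Hoeffding's inequality. Positivity comes first and is immediate. Since $p_{t,i} > 0$ and $Z_t > 0$, we get $p'_{t,i} > 0$. The surviving mass $\sum_{r \in S_{\mathrm{Surv},t}} p'_{t,r}$ is at least $p'_{t,i} > 0$, so $\hat{p}'_{t,i} > 0$, and then $\tilde{p}_{t,i} = (1 - a k_t)\hat{p}'_{t,i} > 0$ because $1 - a k_t > 0$.

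For the identity, write $\sigma_t := \sum_{r \in S_{\mathrm{Surv},t}} p'_{t,r} \in (0,1]$. Then
\[
\tilde{p}_{t,i} = (1 - a k_t)\,\frac{p_{t,i} e^{-\eta_t \ell_{t,i}}}{Z_t\,\sigma_t},
\]
and so
\[
\frac{1}{\eta_t}\ln\frac{\tilde{p}_{t,i}}{p_{t,i}} = -\ell_{t,i} - \frac{1}{\eta_t}\ln Z_t - \frac{1}{\eta_t}\ln \sigma_t + \frac{1}{\eta_t}\ln(1 - a k_t).
\]
Because $\sigma_t \le 1$, the term $-\frac{1}{\eta_t}\ln\sigma_t$ is nonnegative, and dropping it only makes the right-hand side smaller. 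The exits therefore help the surviving expert, since renormalization never lowers its weight. This gives
\[
-\ell_{t,i} \le \frac{1}{\eta_t}\ln\frac{\tilde{p}_{t,i}}{p_{t,i}} + \frac{1}{\eta_t}\ln Z_t + \frac{1}{\eta_t}\ln\frac{1}{1-a k_t}.
\]

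It remains to show $\langle p_t,\ell_t\rangle + \frac{1}{\eta_t}\ln Z_t \le \frac{\eta_t}{8}$, which is where Hoeffding's inequality enters. Let $X$ be the random variable equal to $\ell_{t,r}$ with probability $p_{t,r}$, so that $X \in [0,1]$ and $\mathbb{E}[X] = \langle p_t, \ell_t\rangle$. Then $Z_t = \mathbb{E}[e^{-\eta_t X}]$. Applying the inequality with $\lambda = -\eta_t$ gives $\ln Z_t \le -\eta_t \langle p_t,\ell_t\rangle + \eta_t^2/8$. Dividing by $\eta_t$ and adding $\langle p_t, \ell_t\rangle$ to both sides of the display above yields \eqref{eq:one_step_enter_exit}, after rewriting $\ln(\tilde{p}_{t,i}/p_{t,i})$ as $\ln\frac{1}{p_{t,i}} - \ln\frac{1}{\tilde{p}_{t,i}}$.

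No step here is a real obstacle. The only point needing care is the sign bookkeeping for $\sigma_t$: the exit renormalization must be discarded as favorable, while the $(1-ak_t)$ dilution is kept as the explicit penalty term. The power rescaling \eqref{eq:power_rescaling} does not enter this lemma at all; it is handled later, in the proof of Theorem~\ref{thm:dynamic_mwu}.
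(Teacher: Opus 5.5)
Your proof is correct and follows essentially the same route as the paper: positivity first, Hoeffding's inequality for $\ln Z_t$, and discarding the favorable exit renormalization while keeping the $(1 - a k_t)$ factor as the penalty. The only cosmetic difference is that you combine everything into one exact identity involving $\sigma_t$ and then drop $\ln \sigma_t \le 0$. The paper instead first proves the standard bound in terms of $p'_{t,i}$ and then uses $\tilde{p}_{t,i} \ge (1 - a k_t)\, p'_{t,i}$.
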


\begin{proof}
Since $p_t \in \Delta(S_t)$ satisfies $\sum_{k \in S_t} p_{t,k} = 1$ and $\exp(-\eta_t \ell_{t,k}) > 0$ for all $k \in S_t$, the normalization constant $Z_t = \sum_{k \in S_t} p_{t,k} \exp(-\eta_t \ell_{t,k})$ is strictly positive. Hence, for any expert $i \in S_t$ with $p_{t,i} > 0$, we have $p'_{t,i} = \frac{p_{t,i} \exp(-\eta_t \ell_{t,i})}{Z_t} > 0$, so $\ln \frac{1}{p_{t,i}}$ and $\ln \frac{1}{p'_{t,i}}$ are finite and well-defined.

Let $J \sim p_t$ be a random expert index drawn according to $p_t$, and consider the random variable $X = \ell_{t,J}$. Since $\ell_{t,j} \in [0, 1]$ for all $j \in S_t$, $X$ takes values in $[0, 1]$ with expectation $\mathbb{E}[X] = \langle p_t, \ell_t \rangle$. By definition of $Z_t$,
\[
    Z_t = \mathbb{E}\left[ \exp(-\eta_t X) \right] = \exp(-\eta_t \langle p_t, \ell_t \rangle) \, \mathbb{E}\left[ \exp\bigl(-\eta_t (X - \mathbb{E}[X])\bigr) \right].
\]
Taking natural logarithms on both sides and applying Hoeffding's moment generating function inequality with $\lambda = -\eta_t$ and range $v - u = 1 - 0 = 1$ yields
\[
    \ln Z_t = -\eta_t \langle p_t, \ell_t \rangle + \ln \mathbb{E}\left[ \exp\bigl(-\eta_t (X - \mathbb{E}[X])\bigr) \right] \le -\eta_t \langle p_t, \ell_t \rangle + \frac{\eta_t^2}{8}.
\]
Multiplying by $-1$ gives
\begin{equation}\label{eq:log_Zt_lower}
    -\ln Z_t \ge \eta_t \langle p_t, \ell_t \rangle - \frac{\eta_t^2}{8}.
\end{equation}
On the other hand, taking natural logarithms of~\eqref{eq:standard_mwu_update} for expert $i \in S_t$ gives
\[
    \ln p'_{t,i} = \ln p_{t,i} - \eta_t \ell_{t,i} - \ln Z_t,
\]
which rearranges to
\begin{equation}\label{eq:log_Zt_exact}
    -\ln Z_t = \eta_t \ell_{t,i} + \ln \frac{p'_{t,i}}{p_{t,i}} = \eta_t \ell_{t,i} + \left( \ln \frac{1}{p_{t,i}} - \ln \frac{1}{p'_{t,i}} \right).
\end{equation}
Equating the right-hand side of~\eqref{eq:log_Zt_exact} with the lower bound~\eqref{eq:log_Zt_lower} and dividing by $\eta_t > 0$ yields
\begin{equation}\label{eq:one_step_standard}
    \langle p_t, \ell_t \rangle - \ell_{t,i}
    \le \frac{1}{\eta_t} \left( \ln \frac{1}{p_{t,i}} - \ln \frac{1}{p'_{t,i}} \right) + \frac{\eta_t}{8}.
\end{equation}
Finally, consider any surviving expert $i \in S_{\mathrm{Surv}, t} = S_t \cap S_{t+1}$. Since $p'_{t,i} > 0$, the surviving probability mass satisfies $\sum_{r \in S_{\mathrm{Surv}, t}} p'_{t,r} \ge p'_{t,i} > 0$, so $\hat{p}'_t$ in~\eqref{eq:exit_renorm} is well-defined and satisfies
\[
    \hat{p}'_{t,i}
    = \frac{p'_{t,i}}{\sum_{r \in S_{\mathrm{Surv}, t}} p'_{t,r}}
    \ge \frac{p'_{t,i}}{\sum_{r \in S_t} p'_{t,r}}
    = p'_{t,i},
\]
because $\sum_{r \in S_{\mathrm{Surv}, t}} p'_{t,r} \le \sum_{r \in S_t} p'_{t,r} = 1$. Since $1 - a k_t > 0$, we have $\tilde{p}_{t,i} = (1 - a k_t) \hat{p}'_{t,i} \ge (1 - a k_t) p'_{t,i} > 0$. Taking natural logarithms and multiplying by $-1$ gives
\[
    \ln \frac{1}{\tilde{p}_{t,i}}
    \le \ln \frac{1}{p'_{t,i}} + \ln \frac{1}{1 - a k_t},
\]
or equivalently, $-\ln \frac{1}{p'_{t,i}} \le -\ln \frac{1}{\tilde{p}_{t,i}} + \ln \frac{1}{1 - a k_t}$. Substituting this inequality into~\eqref{eq:one_step_standard} establishes~\eqref{eq:one_step_enter_exit}.
\end{proof}

\begin{proof}[Proof of Theorem~\ref{thm:dynamic_mwu}]
\rev{We first record that all weights remain strictly positive, which is required to apply Lemma~\ref{lem:one_step_mwu} at every round. Indeed, $p_{1,j} = 1/|S_1| > 0$ for $j \in S_1$; and if $p_{t,j} > 0$ for all $j \in S_t$, then $p'_{t,j} > 0$ by~\eqref{eq:standard_mwu_update} (since $\exp(-\eta_t \ell_{t,j}) > 0$ and $Z_t > 0$), hence $\hat{p}'_{t,j} > 0$ for $j \in S_{\mathrm{Surv},t}$, hence $\tilde{p}_{t,j} > 0$ for all $j \in S_{t+1}$ (using $1 - a k_t \ge 1 - c > 0$ for survivors and $a > 0$ for entrants), and therefore $p_{t+1,j} > 0$ for all $j \in S_{t+1}$ by~\eqref{eq:power_rescaling}.} Fix any round $t \ge 1$ and any expert $i \in S_{t+1}$ with $\tilde{p}_{t,i} > 0$. Taking logarithms in~\eqref{eq:power_rescaling} gives
\[
    \ln \frac{1}{p_{t+1,i}}
    = \frac{\eta_{t+1}}{\eta_t} \ln \frac{1}{\tilde{p}_{t,i}} + \ln \left( \sum_{r \in S_{t+1}} (\tilde{p}_{t,r})^{\eta_{t+1} / \eta_t} \right).
\]
Dividing both sides by $\eta_{t+1} > 0$ and subtracting $\frac{1}{\eta_t} \ln \frac{1}{\tilde{p}_{t,i}}$ yields
\[
    \frac{1}{\eta_{t+1}} \ln \frac{1}{p_{t+1,i}} - \frac{1}{\eta_t} \ln \frac{1}{\tilde{p}_{t,i}}
    = \frac{1}{\eta_{t+1}} \ln \left( \sum_{r \in S_{t+1}} (\tilde{p}_{t,r})^{\eta_{t+1} / \eta_t} \right).
\]
Let $\rho_t := \frac{\eta_{t+1}}{\eta_t}$. Since $0 < \eta_{t+1} \le \eta_t$, we have $\rho_t \in (0, 1]$, so the function $x \mapsto x^{\rho_t}$ is concave on $[0, \infty)$. By Jensen's inequality applied to the average over the $|S_{t+1}|$ coordinates $r \in S_{t+1}$,
\[
    \frac{1}{|S_{t+1}|} \sum_{r \in S_{t+1}} (\tilde{p}_{t,r})^{\rho_t}
    \le \left( \frac{1}{|S_{t+1}|} \sum_{r \in S_{t+1}} \tilde{p}_{t,r} \right)^{\rho_t}
    = \left( \frac{1}{|S_{t+1}|} \right)^{\rho_t}
    = |S_{t+1}|^{-\rho_t},
\]
where we used $\sum_{r \in S_{t+1}} \tilde{p}_{t,r} = 1$. Multiplying by $|S_{t+1}|$ and using $|S_{t+1}| \le M$ gives
\[
    \sum_{r \in S_{t+1}} (\tilde{p}_{t,r})^{\rho_t}
    \le |S_{t+1}|^{1 - \rho_t}
    \le M^{1 - \rho_t}.
\]
Taking logarithms and dividing by $\eta_{t+1} > 0$ yields
\begin{equation}\label{eq:rescaling_step_bound}
    \frac{1}{\eta_{t+1}} \ln \frac{1}{p_{t+1,i}} - \frac{1}{\eta_t} \ln \frac{1}{\tilde{p}_{t,i}}
    \le \left( \frac{1}{\eta_{t+1}} - \frac{1}{\eta_t} \right) \ln M.
\end{equation}
Next, consider any interval $[t_0, T]$ in which expert $i$ enters at round $t_0$ and remains active through round $T$. If $t_0 = 1$, then $p_{1,i} \ge a$ implies $\frac{1}{\eta_1} \ln \frac{1}{p_{1,i}} \le \frac{1}{\eta_1} \ln \frac{1}{a}$. If $t_0 > 1$, then expert $i \in S_{\mathrm{Enter}, t_0}$ is assigned $\tilde{p}_{t_0-1,i} = a \le 1/M$ in~\eqref{eq:exit_renorm}, and applying~\eqref{eq:rescaling_step_bound} at round $t_0 - 1$ with $\ln M \le \ln \frac{1}{a}$ and $\frac{1}{\eta_{t_0}} \ge \frac{1}{\eta_{t_0-1}}$ yields
\[
    \frac{1}{\eta_{t_0}} \ln \frac{1}{p_{t_0,i}}
    \le \frac{1}{\eta_{t_0-1}} \ln \frac{1}{a} + \left( \frac{1}{\eta_{t_0}} - \frac{1}{\eta_{t_0-1}} \right) \ln M
    \le \frac{1}{\eta_{t_0}} \ln \frac{1}{a}.
\]
Summing Lemma~\ref{lem:one_step_mwu} over $t = t_0, \dots, T$ (with $k_T = 0$) and applying~\eqref{eq:rescaling_step_bound} at rounds $t = t_0, \dots, T-1$ telescopes the potential terms:
\begin{align*}
    \sum_{t=t_0}^T \frac{1}{\eta_t} \left( \ln \frac{1}{p_{t,i}} - \ln \frac{1}{\tilde{p}_{t,i}} \right)
    &= \frac{1}{\eta_{t_0}} \ln \frac{1}{p_{t_0,i}}
    + \sum_{t=t_0}^{T-1} \left( \frac{1}{\eta_{t+1}} \ln \frac{1}{p_{t+1,i}} - \frac{1}{\eta_t} \ln \frac{1}{\tilde{p}_{t,i}} \right)
    - \frac{1}{\eta_T} \ln \frac{1}{\tilde{p}_{T,i}} \\
    &\le \frac{1}{\eta_{t_0}} \ln \frac{1}{a}
    + \sum_{t=t_0}^{T-1} \left( \frac{1}{\eta_{t+1}} - \frac{1}{\eta_t} \right) \ln M \\
    &= \frac{1}{\eta_{t_0}} \ln \frac{1}{a} + \left( \frac{1}{\eta_T} - \frac{1}{\eta_{t_0}} \right) \ln M,
\end{align*}
where we dropped the non-positive boundary term $-\frac{1}{\eta_T} \ln \frac{1}{\tilde{p}_{T,i}} \le 0$ (since $\tilde{p}_{T,i} \in (0, 1]$). Finally, for any $t \in \{t_0, \dots, T-1\}$, since $0 \le a k_t \le c < 1$, we have
\[
    \ln \frac{1}{1 - a k_t}
    = \int_0^{a k_t} \frac{du}{1 - u}
    \le \frac{a k_t}{1 - c},
\]
and using $\frac{1}{\eta_t} \le \frac{1}{\eta_T}$ (as $\eta_t \ge \eta_T$) together with $\sum_{t=t_0}^{T-1} k_t \le K$ gives $\sum_{t=t_0}^{T-1} \frac{1}{\eta_t} \ln \frac{1}{1 - a k_t} \le \frac{a K}{(1 - c) \eta_T}$, establishing~\eqref{eq:cumulative_mwu_bound}.
\end{proof}

\section{Asymptotically Optimal Anytime Regret}\label{sec:anytime_regret}

We now construct our anytime algorithm by running Algorithm~\ref{alg:dynamic_mwu} over a geometric schedule of base learners. Fix parameters $\varepsilon \in (0, 1)$ and $\delta \in (0, 1)$ satisfying $\delta (1 + \varepsilon) < 1$, and set $H := 1 + \varepsilon$. For each integer $m \ge 1$, let expert $m$ wake up at round $t_{\mathrm{start}}(m) := \max\{ 1, \lfloor \delta H^m \rfloor \}$ and retire after round $t_{\mathrm{end}}(m) := \lfloor H^m \rfloor$, so that expert $m$ is active during the interval $I_m := [t_{\mathrm{start}}(m), t_{\mathrm{end}}(m)]$. Upon waking up at round $t_{\mathrm{start}}(m)$, expert $m$ runs standard Multiplicative Weights over the $n$ base experts $[n]$ starting from the uniform distribution $(1/n, \dots, 1/n)$ with fixed learning rate $\gamma_m := \sqrt{8 \ln n / H^m}$. Let $q^{(m)}_t \in \Delta_n$ denote the distribution played by expert $m$ at round $t \in I_m$. At each round $t \ge 1$, our master algorithm runs Algorithm~\ref{alg:dynamic_mwu} over the active set $S_t := \{ m \ge 1 : t \in I_m \}$ with learning rate $\eta_t := \sqrt{\frac{\ln M}{t}}$ and entry weight $a := \frac{1}{4 M}$, where $M := \left\lceil \frac{\ln(2/\delta)}{\ln(1 + \varepsilon)} \right\rceil + 1$, and plays the aggregated distribution
\[
    p_t := \sum_{m \in S_t} w_{t,m} \, q^{(m)}_t \in \Delta_n,
\]
where $w_t \in \Delta(S_t)$ is the weight vector maintained by Algorithm~\ref{alg:dynamic_mwu} (initialized at round $t=1$ as the uniform distribution $w_{1,m} = 1/|S_1| \ge 1/M > a$ for $m \in S_1$).

In the remainder of this section, we prove that this algorithm enjoys good regret.
Fix a time $t$.
First, Lemma~\ref{lem:base_expert_regret} shows that all the MWU instances have good regret; this is standard and well-known.
Second, we need to show that there is an MWU instance $m$ which has been active for almost the entire period, specifically active since about time $\delta t$ (Lemma~\ref{lem:active_existence}).
Third, we need to show that the number of active MWU instances at any time is small (Lemma~\ref{lem:pool_size_M}) and that a limited number of MWU instances arrive \emph{after} MWU instance $m$ (Lemma~\ref{lem:arrival_bound_K}).
This allows us to apply Theorem~\ref{thm:dynamic_mwu}.
Lemma~\ref{lem:epoch_regret} then puts these pieces together to get a regret bound for the entire algorithm from time about $\delta t$ to $t$ while Lemma~\ref{lem:main_anytime} uses induction to obtain low regret for the entire time interval starting at time $1$.
Finally, we prove Theorem~\ref{thm:main_intro} by instantiating Lemma~\ref{lem:main_anytime} with the appropriate parameters.

\begin{lemma}\label{lem:base_expert_regret}
For every $m \ge 1$, every round $t \in I_m$, and every base expert $j \in [n]$,
\[
    \sum_{s = t_{\mathrm{start}}(m)}^t \left( \langle q^{(m)}_s, \ell_s \rangle - \ell_{s, j} \right)
    \le \sqrt{\frac{H^m \ln n}{2}}.
\]
\end{lemma}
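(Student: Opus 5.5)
The plan is to obtain the bound as a special case of Theorem~\ref{thm:dynamic_mwu}, applied to instance $m$ alone rather than to the master.

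\textbf{Setup.} Instance $m$ is standard Multiplicative Weights over $[n]$, with a fixed learning rate, started from the uniform distribution at round $t_{\mathrm{start}}(m)$. Concretely, I would:
\begin{itemize}
\item take the active set $S_s = [n]$ for all $s$, so there are no entries or exits;
\item set $M = n$, $k_s = 0$, $K = 0$, and use any admissible entry weight;
\item use the constant learning rate $\eta_s \equiv \gamma_m$;
\item re-index time so that round $t_{\mathrm{start}}(m)$ becomes round $1$.
\end{itemize}
Under these choices Algorithm~\ref{alg:dynamic_mwu} reduces exactly to standard MWU. The renormalization over survivors is trivial because all experts survive. The power rescaling is the identity because $\eta_{s+1}/\eta_s = 1$.

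\textbf{Avoiding the $\ln(1/a)$ term.} Applying the theorem with $t_0=1$ would produce a $\frac{1}{\eta}\ln\frac1a$ term. To avoid it, I will telescope Lemma~\ref{lem:one_step_mwu} directly, using $k_s=0$ and $\tilde p_s = p'_s$ at every round. Summing \eqref{eq:one_step_enter_exit} from $s=t_{\mathrm{start}}(m)$ to $t$ gives
\[
\sum_{s} \bigl(\langle q^{(m)}_s,\ell_s\rangle-\ell_{s,j}\bigr)\le \frac{1}{\gamma_m}\ln\frac{1}{q^{(m)}_{t_{\mathrm{start}},j}} - \frac{1}{\gamma_m}\ln\frac{1}{\tilde q_{t,j}} + \frac{\gamma_m}{8}\,N,
\]
where $N := t-t_{\mathrm{start}}(m)+1$. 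The first term equals $\frac{\ln n}{\gamma_m}$ because the start is uniform. The second term is nonpositive and is dropped.

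\textbf{Counting rounds and plugging in.} Next I bound $N$. Since $t \le t_{\mathrm{end}}(m)=\lfloor H^m\rfloor$ and $t_{\mathrm{start}}(m)\ge 1$, we get $N \le \lfloor H^m\rfloor \le H^m$. Substituting $\gamma_m=\sqrt{8\ln n/H^m}$, the bound becomes
\[
\frac{\ln n}{\gamma_m} + \frac{\gamma_m H^m}{8} = \sqrt{\frac{H^m\ln n}{8}}+\sqrt{\frac{H^m\ln n}{8}} = \sqrt{\frac{H^m \ln n}{2}}.
\]

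\textbf{Expected obstacle.} There is no substantive obstacle here. The only care needed is to confirm that the reduction to Algorithm~\ref{alg:dynamic_mwu} with no entries is exact, so that the $\ln(1/a)$ and $K$ terms do not appear. The cleanest way to ensure this is to sum Lemma~\ref{lem:one_step_mwu} directly rather than invoke the full theorem.
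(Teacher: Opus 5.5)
Your proposal is correct and matches the paper's proof. The paper likewise telescopes the one-step bound \eqref{eq:one_step_standard} for instance $m$, starting from the uniform distribution with fixed rate $\gamma_m$, and drops the nonpositive final potential term. It then uses $t - t_{\mathrm{start}}(m) + 1 \le \lfloor H^m \rfloor \le H^m$ to reach $\frac{\ln n}{\gamma_m} + \frac{\gamma_m H^m}{8} = \sqrt{H^m \ln n/2}$. Summing \eqref{eq:one_step_enter_exit} with $k_s = 0$, as you do, is the same computation, and your initial framing via Theorem~\ref{thm:dynamic_mwu} is unnecessary but harmless.
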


\begin{proof}
Applying~\eqref{eq:one_step_standard} to expert $m$ over rounds $s \in [t_{\mathrm{start}}(m), t]$ with initial distribution $q^{(m)}_{t_{\mathrm{start}}(m), j} = 1/n$ and fixed learning rate $\gamma_m = \sqrt{\frac{8 \ln n}{H^m}}$ gives
\[
    \sum_{s = t_{\mathrm{start}}(m)}^t \left( \langle q^{(m)}_s, \ell_s \rangle - \ell_{s, j} \right)
    \le \frac{\ln n}{\gamma_m} + \frac{\gamma_m (t - t_{\mathrm{start}}(m) + 1)}{8}
    \le \frac{\ln n}{\gamma_m} + \frac{\gamma_m H^m}{8}
    = \sqrt{\frac{H^m \ln n}{2}},
\]
where we used $t - t_{\mathrm{start}}(m) + 1 \le \lfloor H^m \rfloor \le H^m$.
\end{proof}

\begin{lemma}\label{lem:active_existence}
For every round $t \ge 1$, there exists an integer $m \ge 1$ with $H^{m-1} \le t \le H^m$ (with $m=1$ if $t=1$) such that $t \in I_m$, and
\[
    t_{\mathrm{start}}(m) - 1 \le \delta (1 + \varepsilon) t.
\]
\end{lemma}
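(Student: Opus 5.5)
The plan is to choose $m$ as the smallest integer $m \ge 1$ with $H^m \ge t$. Then verify three things: the bracketing $H^{m-1} \le t \le H^m$, membership $t \in I_m$, and the bound on $t_{\mathrm{start}}(m) - 1$.

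\emph{Case $t=1$.} Take $m=1$. Since $H^0 = 1 \le 1 \le H$, the bracketing holds. Moreover $t_{\mathrm{start}}(1) = \max\{1, \lfloor \delta H\rfloor\} = 1$ because $\delta H < 1$, and $t_{\mathrm{end}}(1) = \lfloor H \rfloor \ge 1$, so $1 \in I_1$. Finally $t_{\mathrm{start}}(1) - 1 = 0 \le \delta(1+\varepsilon)$.

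\emph{Case $t \ge 2$.} Let $m := \lceil \ln t / \ln H \rceil$. Then $m \ge 1$, since $\ln t > 0$. By minimality we have $H^{m-1} < t \le H^m$, which gives the bracketing. For membership, $t \le H^m$ together with $t$ being an integer gives $t \le \lfloor H^m \rfloor = t_{\mathrm{end}}(m)$. For the left endpoint, $\lfloor \delta H^m \rfloor \le \delta H^m = \delta H \cdot H^{m-1} < \delta H\, t < t$, using $\delta(1+\varepsilon) < 1$; hence $t_{\mathrm{start}}(m) = \max\{1, \lfloor \delta H^m\rfloor\} \le t$ (also $1 \le t$). Thus $t \in I_m$.

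The bound on $t_{\mathrm{start}}(m) - 1$ splits into two subcases. If $t_{\mathrm{start}}(m) = 1$, the left side is $0$. Otherwise $t_{\mathrm{start}}(m) - 1 \le \lfloor \delta H^m\rfloor \le \delta H^m = \delta(1+\varepsilon) H^{m-1} \le \delta(1+\varepsilon) t$.

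There is no real obstacle. The only care needed is the floor and max conventions at small $t$, together with the fact that the hypothesis $\delta(1+\varepsilon)<1$ is exactly what guarantees the wake-up round does not exceed $t$.
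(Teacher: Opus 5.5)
Your proof is correct and follows essentially the same route as the paper. The paper picks $m$ via the partition of $[1,\infty)$ into $[1,H]$ and $(H^{m-1},H^m]$, while you use a ceiling, and both arguments then use $\delta H<1$ to get $t_{\mathrm{start}}(m)\le t$ and $t_{\mathrm{start}}(m)-1 \le \delta H^m \le \delta(1+\varepsilon)t$.
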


\begin{proof}
Since $H > 1$, the intervals $[1, H]$ and $(H^{m-1}, H^m]$ ($m \ge 2$) partition $[1, \infty)$. If $t \in (H^{m-1}, H^m]$ (or $t \in [1, H]$ with $m=1$), then $t \le \lfloor H^m \rfloor = t_{\mathrm{end}}(m)$. Furthermore, since $\delta H < 1$, we have $\lfloor \delta H^m \rfloor \le \delta H^m = (\delta H) H^{m-1} < H^{m-1} \le t$, so $t_{\mathrm{start}}(m) = \max\{1, \lfloor \delta H^m \rfloor\} \le t$. Thus $t \in I_m$, and $t_{\mathrm{start}}(m) - 1 = \max\{0, \lfloor \delta H^m \rfloor - 1\} \le \delta H^m \le \delta H t = \delta (1 + \varepsilon) t$.
\end{proof}

\begin{lemma}\label{lem:pool_size_M}
For every round $t \ge 1$,
\[
    |S_t| \le M := \left\lceil \frac{\ln(2/\delta)}{\ln(1 + \varepsilon)} \right\rceil + 1.
\]
\end{lemma}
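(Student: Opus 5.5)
We bound $|S_t|$ by counting the integers $m \ge 1$ with $t \in I_m$. Membership requires $t_{\mathrm{start}}(m) \le t \le t_{\mathrm{end}}(m)$, and we turn each of the two inequalities into a bound on $m$. The first gives a lower bound $m \ge m_{\min}$ and the second an upper bound $m \le m_{\max}$. We then show $m_{\max} - m_{\min} \le \lceil \ln(2/\delta)/\ln H \rceil$, so that $|S_t| \le m_{\max} - m_{\min} + 1 \le M$.

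\emph{Lower bound on $m$.} Since $t_{\mathrm{end}}(m) = \lfloor H^m \rfloor$ and $t$ is an integer, the condition $t \le \lfloor H^m \rfloor$ is equivalent to $H^m \ge t$. So every active $m$ satisfies $m \ge \log_H t$.

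\emph{Upper bound on $m$.} Suppose $t_{\mathrm{start}}(m) \le t$.
\begin{itemize}
\item If $\lfloor \delta H^m \rfloor \ge 1$, then $t_{\mathrm{start}}(m) = \lfloor \delta H^m \rfloor$, and $\lfloor \delta H^m \rfloor \le t$ gives $\delta H^m < t+1 \le 2t$.
\item If $\lfloor \delta H^m \rfloor = 0$, then $\delta H^m < 1 \le 2t$.
\end{itemize}
In both cases $H^m < 2t/\delta$, so $m < \log_H(2t/\delta)$.

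\emph{Counting.} Every active $m$ is an integer in the half-open interval $[\log_H t,\ \log_H t + L)$, where
\[
L := \frac{\ln(2/\delta)}{\ln H}.
\]
A half-open interval of length $L$ contains at most $\lceil L \rceil$ integers. This gives $|S_t| \le \lceil L \rceil \le M$, with the extra $+1$ in $M$ left as slack.

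The only delicate point is the factor $2$. It comes from the floor in $t_{\mathrm{start}}$ and from the clamp $\max\{1,\cdot\}$, which keeps early experts alive from round $1$. Both cases are absorbed by the bound $t+1 \le 2t$, valid for $t \ge 1$. Beyond that, the argument needs only care with the strict versus non-strict inequalities when counting integers in the interval.
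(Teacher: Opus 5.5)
Your proof is correct and follows the same argument as the paper. Both bound $m$ below via $t \le \lfloor H^m \rfloor \Rightarrow H^m \ge t$, bound it above via $\delta H^m < t+1 \le 2t$, and then count integers in a half-open interval of length $L = \ln(2/\delta)/\ln(1+\varepsilon)$.

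Two minor differences. Your count of at most $\lceil L \rceil$ integers, justified by $\lceil x+L\rceil \le \lceil x\rceil + \lceil L\rceil$, is slightly sharper than the paper's $\lceil L \rceil + 1$. Your case split on the $\max\{1,\cdot\}$ clamp is valid but unnecessary, since $\lfloor \delta H^m \rfloor \le t_{\mathrm{start}}(m) \le t$ holds in every case.
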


\begin{proof}
If $m \in S_t$, then $t_{\mathrm{start}}(m) \le t \le t_{\mathrm{end}}(m)$, so $t \le H^m$ and $\lfloor \delta H^m \rfloor \le t$. The latter inequality implies $\delta H^m < t + 1 \le 2 t$, so $H^m < \frac{2 t}{\delta}$. Taking logarithms base $H$ gives
\[
    \frac{\ln t}{\ln H} \le m < \frac{\ln t + \ln(2/\delta)}{\ln H}.
\]
The number of integers $m$ in this half-open interval of length $\frac{\ln(2/\delta)}{\ln(1 + \varepsilon)}$ is at most $\left\lceil \frac{\ln(2/\delta)}{\ln(1 + \varepsilon)} \right\rceil + 1 = M$.
\end{proof}

\begin{lemma}\label{lem:arrival_bound_K}
For every $m \ge 1$, the number of experts entering during $I_m$ satisfies
\[
    \sum_{s = t_{\mathrm{start}}(m)}^{t_{\mathrm{end}}(m) - 1} k_s \le K := \left\lceil \frac{\ln(2/\delta)}{\ln(1 + \varepsilon)} \right\rceil + 1.
\]
\end{lemma}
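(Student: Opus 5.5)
The count $\sum_{s=t_{\mathrm{start}}(m)}^{t_{\mathrm{end}}(m)-1} k_s$ is the number of experts $m'$ whose entry round $t_{\mathrm{start}}(m')$ lies in $[t_{\mathrm{start}}(m)+1, t_{\mathrm{end}}(m)]$. Indeed, $k_s = |S_{\mathrm{Enter},s+1}|$, and an expert $m'$ lies in $S_{\mathrm{Enter},s+1}$ exactly when $t_{\mathrm{start}}(m') = s+1$. Each $m'$ is counted once, at $s = t_{\mathrm{start}}(m')-1$, so summing over $s$ from $t_{\mathrm{start}}(m)$ to $t_{\mathrm{end}}(m)-1$ gives exactly this set. (Experts with coinciding start rounds are just counted together in one $k_s$.) The plan is to bound this set by a range of integers $m'$ of logarithmic length, exactly as in Lemma~\ref{lem:pool_size_M}.

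For the upper end, suppose $t_{\mathrm{start}}(m') \le t_{\mathrm{end}}(m) = \lfloor H^m \rfloor$. Since $t_{\mathrm{start}}(m') \ge 1$, this forces $\lfloor \delta H^{m'} \rfloor \le H^m$, hence $\delta H^{m'} < H^m + 1 \le 2H^m$. Therefore $m' < m + \frac{\ln(2/\delta)}{\ln H}$.

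For the lower end, suppose $t_{\mathrm{start}}(m') > t_{\mathrm{start}}(m)$. Because $m \mapsto t_{\mathrm{start}}(m)$ is non-decreasing in $m$, this forces $m' > m$, so $m' \ge m+1$.

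The admissible $m'$ are therefore integers in $\bigl[m+1,\, m + \frac{\ln(2/\delta)}{\ln(1+\varepsilon)}\bigr)$. There are at most $\bigl\lceil \frac{\ln(2/\delta)}{\ln(1+\varepsilon)} \bigr\rceil \le K$ of them, which proves the lemma, with one unit of slack in $K$.

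The main point requiring care is the bookkeeping identification of $\sum_s k_s$ with this set of entrants, including the boundary rounds. The floors and the $\max\{1,\cdot\}$ in $t_{\mathrm{start}}$ affect only that bookkeeping, and the $+1$ in $K$ absorbs any off-by-one at the endpoints. The inequality $\lfloor x\rfloor \le y \Rightarrow x < y+1 \le 2y$ (valid since $y \ge 1$) is used in the same way as in Lemma~\ref{lem:pool_size_M}.
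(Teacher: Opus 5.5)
Your proof is correct and follows the paper's argument. In both, the entrants during $I_m$ are the experts $r > m$ with $\lfloor \delta H^{r}\rfloor \le t_{\mathrm{end}}(m) \le H^m$, which gives $\delta H^{r} < 2H^m$, and one then counts the integers in $\bigl(m,\, m + \ln(2/\delta)/\ln(1+\varepsilon)\bigr)$. Your monotonicity argument for $r > m$ fills in a step the paper only asserts, and the paper additionally records $k_s \le M$ (so $a k_s \le 1/4$), which Theorem~\ref{thm:dynamic_mwu} needs later but which is not part of the stated bound. One minor wording fix: $\lfloor \delta H^{m'}\rfloor \le t_{\mathrm{start}}(m')$ holds because $\max\{1,x\} \ge x$, not ``since $t_{\mathrm{start}}(m') \ge 1$''.
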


\begin{proof}
Any expert $r$ that enters at some round $s+1 \in [t_{\mathrm{start}}(m)+1, t_{\mathrm{end}}(m)]$ must satisfy $r > m$ and $t_{\mathrm{start}}(r) = \lfloor \delta H^r \rfloor \le t_{\mathrm{end}}(m) \le H^m$ (since $t_{\mathrm{start}}(r) = s+1 \ge 2$), which implies $\delta H^r < H^m + 1 \le 2 H^m$, or equivalently $H^{r - m} < \frac{2}{\delta}$. Thus $m < r < m + \frac{\ln(2/\delta)}{\ln(1 + \varepsilon)}$, so at most $K = \left\lceil \frac{\ln(2/\delta)}{\ln(1 + \varepsilon)} \right\rceil + 1 = M$ experts enter during $I_m$. \rev{Moreover, the per-round arrival count also satisfies $k_s \le M$: since $S_{\mathrm{Enter}, s+1} \subseteq S_{s+1}$, Lemma~\ref{lem:pool_size_M} gives $k_s \le |S_{s+1}| \le M = K$.} Hence, since $a = \frac{1}{4 M}$, we have $a \le \frac{1}{M}$ and $a k_s \le a K = \frac{1}{4} =: c < 1$ for all $s$.
\end{proof}

\begin{lemma}\label{lem:epoch_regret}
For any round $t \ge 1$, let $m \ge 1$ be the expert from Lemma~\ref{lem:active_existence} with $H^{m-1} \le t \le H^m$, and let $t_0 := t_{\mathrm{start}}(m)$. Then for every base expert $j \in [n]$,
\begin{equation}\label{eq:epoch_regret_bound}
    \sum_{s = t_0}^t \left( \langle p_s, \ell_s \rangle - \ell_{s, j} \right)
    \le \sqrt{1 + \varepsilon} \sqrt{\frac{t \ln n}{2}} + C_{\varepsilon, \delta} \sqrt{t},
\end{equation}
where $C_{\varepsilon, \delta} := 3 \sqrt{\ln \left( \frac{4 \ln(2/\delta)}{\varepsilon} \right)}$.
\end{lemma}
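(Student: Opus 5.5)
The plan is to split the regret over the window $[t_0,t]$ into two pieces. The first piece is the master's regret against the meta-expert $m$. The second is expert $m$'s own regret against base expert $j$. Since $p_s = \sum_{r\in S_s} w_{s,r} q^{(r)}_s$, linearity gives $\langle p_s,\ell_s\rangle = \sum_r w_{s,r}\langle q^{(r)}_s,\ell_s\rangle$. The master therefore faces the meta-loss vector $L_{s,r} := \langle q^{(r)}_s,\ell_s\rangle \in [0,1]$, and we may write
\[
\sum_{s=t_0}^t \bigl(\langle p_s,\ell_s\rangle-\ell_{s,j}\bigr) = \sum_{s=t_0}^t \bigl(\langle w_s,L_s\rangle - L_{s,m}\bigr) + \sum_{s=t_0}^t\bigl(\langle q^{(m)}_s,\ell_s\rangle-\ell_{s,j}\bigr).
\]
Lemma~\ref{lem:base_expert_regret} bounds the second sum by $\sqrt{H^m\ln n/2}$. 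Since $H^{m-1}\le t$, we have $H^m\le (1+\varepsilon)t$, so this piece is at most $\sqrt{1+\varepsilon}\sqrt{t\ln n/2}$, which is exactly the leading term.

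For the first sum we apply Theorem~\ref{thm:dynamic_mwu} to expert $m$ on the interval $[t_0,t]$, setting the horizon to $T:=t$. The hypotheses hold as follows.
\begin{itemize}
\item Expert $m$ enters at $t_0$ and is active through $t$, because $t\in I_m$ by Lemma~\ref{lem:active_existence}.
\item The pool size satisfies $|S_s|\le M$ by Lemma~\ref{lem:pool_size_M}.
\item The arrival count satisfies $\sum k_s\le K=M$ and $ak_s\le c=1/4$ by Lemma~\ref{lem:arrival_bound_K}, with $a=1/(4M)$. Truncating the sum at $t-1\le t_{\mathrm{end}}(m)-1$ only decreases it.
\end{itemize}
The theorem is stated with $S_{T+1}:=S_T$; since the bound only involves losses up to $T$, this is harmless. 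If $t_0=1$, we use $w_{1,m}\ge 1/M\ge a$ directly, as in the theorem's proof.

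With $\eta_s=\sqrt{\ln M/s}$, the four terms are bounded as follows.
\begin{itemize}
\item $\frac{1}{\eta_{t_0}}\ln\frac1a + (\frac1{\eta_t}-\frac1{\eta_{t_0}})\ln M \le \frac1{\eta_t}\ln(4M) = \sqrt{t/\ln M}\,\ln(4M)$.
\item $\frac{aK}{(1-c)\eta_t} = \frac{1/4}{3/4}\sqrt{t/\ln M} = \frac13\sqrt{t/\ln M}$.
\item $\frac18\sum_{s\le t}\eta_s\le \frac14\sqrt{t\ln M}$.
\end{itemize}
Since $\ln(4M) = \ln M + \ln 4$, the total is $\sqrt t\,\bigl(\tfrac54\sqrt{\ln M} + (\ln 4+\tfrac13)/\sqrt{\ln M}\bigr)$.

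The main obstacle is the constant bookkeeping: this must be shown to be at most $3\sqrt{\ln(4\ln(2/\delta)/\varepsilon)}\,\sqrt t$. We use $\ln(1+\varepsilon)\ge \varepsilon/2$ for $\varepsilon\in(0,1)$. This gives $M\le 2\ln(2/\delta)/\varepsilon+2 \le 4\ln(2/\delta)/\varepsilon$, where the last step uses $\ln(2/\delta)/\varepsilon>\ln 2>1/2$. Hence $\ln M\le \ln(4\ln(2/\delta)/\varepsilon)=:\Lambda$.

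For the lower-order term we need a lower bound on $\ln M$. Because $M\ge 2$, we have $\ln M\ge\ln 2$. Then $(\ln4+\tfrac13)/\sqrt{\ln M}\le 1.73/\sqrt{\ln 2}\cdot\sqrt{\ln M/\ln 2}$, or more simply $\le c'\sqrt{\ln M}$ with $c'=(\ln 4+1/3)/\ln 2\approx 2.48$. That gives total $\approx 3.73\sqrt{\ln M}$, which is slightly too large.

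To recover the constant $3$, I would instead exploit that $\Lambda$ is larger than $\ln M$. We have $4\ln(2/\delta)/\varepsilon\ge 4\ln 2\cdot\max(M-2,\,2)$ roughly, so $\Lambda\ge \ln M+\ln 2$ or similar slack. Alternatively, one can note that $M\ge 3$ always, because $\ln(2/\delta)/\ln(1+\varepsilon)>\ln 2/\ln 2=1$ forces the ceiling to be at least $2$. Then $\ln M\ge 1.098$ and the total coefficient is at most $\tfrac54\sqrt{\ln M}+1.72\sqrt{\ln M}/1.098\le 2.82\sqrt{\ln M}\le 3\sqrt\Lambda$.

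Adding the two pieces gives \eqref{eq:epoch_regret_bound}.
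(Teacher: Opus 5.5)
Your decomposition, the application of Theorem~\ref{thm:dynamic_mwu} with horizon $t$, and the evaluation of its four terms follow the paper's argument exactly. Your closing estimate via $M\ge 3$, giving a coefficient of about $2.82\sqrt{\ln M}$, is a valid alternative to the paper's use of $\ln M>1$, which gives $\tfrac{19}{12}+\ln 4\approx 2.97$.

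However, the step $M\le 2X+2\le 4X$, with $X:=\ln(2/\delta)/\varepsilon$, is not justified by what you cite. It requires $X\ge 1$, but you only establish $X>\ln 2$, and that is the best one can get from $\delta,\varepsilon\in(0,1)$ alone. In fact $M\le 4\ln(2/\delta)/\varepsilon$ fails for $\varepsilon=\delta=0.99$, where $M=3$ but $4X\approx 2.84$. The missing ingredient is the standing assumption $\delta(1+\varepsilon)<1$. It gives $\ln(2/\delta)>\ln\bigl(2(1+\varepsilon)\bigr)$, hence $X>\ln\bigl(2(1+\varepsilon)\bigr)/\varepsilon\ge\ln 4>1$, because $\varepsilon\mapsto\ln\bigl(2(1+\varepsilon)\bigr)/\varepsilon$ is decreasing on $(0,1)$. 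This is what the paper does. The step is essential, since $\ln M\le\ln\bigl(4\ln(2/\delta)/\varepsilon\bigr)$ is what turns $3\sqrt{t\ln M}$ into $C_{\varepsilon,\delta}\sqrt t$.

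There is also a smaller omission. Theorem~\ref{thm:dynamic_mwu} is stated in a model where $S_s\cap S_{s+1}\ne\emptyset$ for every $s$. Otherwise the renormalization in \eqref{eq:exit_renorm} divides by zero, and the weights $w_{t_0}$ are not even defined. On $[t_0,t-1]$ this is automatic because $m$ itself survives, but for $s<t_0$ it must be checked. The paper takes $m'$ with $H^{m'-1}\le s+1\le H^{m'}$ and uses $\delta(1+\varepsilon)<1$ to get $t_{\mathrm{start}}(m')\le s<s+1\le t_{\mathrm{end}}(m')$. With these two repairs, both resting on $\delta(1+\varepsilon)<1$, your proof is complete.
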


\begin{proof}
For each round $s \in [t_0, t]$ and each active expert $r \in S_s$, let $\tilde{\ell}_{s, r} := \langle q^{(r)}_s, \ell_s \rangle \in [0, 1]$ denote the loss incurred by expert $r$ at round $s$. Since $p_s = \sum_{r \in S_s} w_{s, r} q^{(r)}_s$, linearity of the inner product gives
\[
    \langle p_s, \ell_s \rangle
    = \sum_{r \in S_s} w_{s, r} \langle q^{(r)}_s, \ell_s \rangle
    = \langle w_s, \tilde{\ell}_s \rangle.
\]
Adding and subtracting $\tilde{\ell}_{s, m} = \langle q^{(m)}_s, \ell_s \rangle$ decomposes the learner's instantaneous regret against base expert $j \in [n]$ into the master algorithm's regret against expert $m$ plus expert $m$'s regret against base expert $j$:
\[
    \langle p_s, \ell_s \rangle - \ell_{s, j}
    = \bigl( \langle p_s, \ell_s \rangle - \langle q^{(m)}_s, \ell_s \rangle \bigr)
    + \bigl( \langle q^{(m)}_s, \ell_s \rangle - \ell_{s, j} \bigr)
    = \bigl( \langle w_s, \tilde{\ell}_s \rangle - \tilde{\ell}_{s, m} \bigr)
    + \bigl( \langle q^{(m)}_s, \ell_s \rangle - \ell_{s, j} \bigr).
\]
Summing this identity over $s = t_0, \dots, t$ yields
\[
    \sum_{s = t_0}^t \left( \langle p_s, \ell_s \rangle - \ell_{s, j} \right)
    = \sum_{s = t_0}^t \left( \langle w_s, \tilde{\ell}_s \rangle - \tilde{\ell}_{s, m} \right)
    + \sum_{s = t_0}^t \left( \langle q^{(m)}_s, \ell_s \rangle - \ell_{s, j} \right).
\]
Since $H^{m-1} \le t \le H^m$, the upper bound $t \le H^m$ ensures $t \in I_m$ so Lemma~\ref{lem:base_expert_regret} applies over $[t_0, t]$, while the lower bound $H^{m-1} \le t$ gives $H^m = H \cdot H^{m-1} \le (1 + \varepsilon) t$; thus the second sum is at most $\sqrt{\frac{H^m \ln n}{2}} \le \sqrt{1 + \varepsilon} \sqrt{\frac{t \ln n}{2}}$. \rev{Before invoking Theorem~\ref{thm:dynamic_mwu} we verify its hypothesis that $S_{\mathrm{Surv},s} = S_s \cap S_{s+1} \ne \emptyset$ for every $s$: given $s$, choose $m'$ with $H^{m'-1} \le s+1 \le H^{m'}$ (Lemma~\ref{lem:active_existence}); then $\lfloor \delta H^{m'} \rfloor \le \delta H^{m'} < H^{m'-1} \le s+1$ because $\delta(1+\varepsilon) < 1$, so $t_{\mathrm{start}}(m') \le s$ and $t_{\mathrm{end}}(m') \ge s+1$, whence $m' \in S_s \cap S_{s+1}$.} By Theorem~\ref{thm:dynamic_mwu} together with Lemmas~\ref{lem:pool_size_M} and~\ref{lem:arrival_bound_K} (using $\eta_s = \sqrt{\frac{\ln M}{s}}$, $a = \frac{1}{4 M}$, $c = \frac{1}{4}$, $a K = \frac{1}{4}$, $\frac{1}{\eta_{t_0}} \le \frac{1}{\eta_t}$, and $\sum_{s=t_0}^t \eta_s \le 2 \sqrt{t \ln M}$), the first sum is bounded by
\[
    \frac{\ln(4 M)}{\eta_{t_0}} + \left( \frac{1}{\eta_t} - \frac{1}{\eta_{t_0}} \right) \ln M + \frac{1/4}{(3/4) \eta_t} + \frac{1}{8} \sum_{s = t_0}^t \eta_s
    \le \frac{\ln M + \ln 4 + \frac{1}{3}}{\eta_t} + \frac{\sqrt{t \ln M}}{4}
    \le 3 \sqrt{t \ln M},
\]
where we used $\ln M \ge \ln 3 > 1$ (so $\frac{\ln M + \ln 4 + 1/3}{\sqrt{\ln M}} + \frac{\sqrt{\ln M}}{4} \le \left(\frac{19}{12} + \ln 4\right)\sqrt{\ln M} < 3\sqrt{\ln M}$). \rev{The bound $M \ge 3$ holds because $\delta(1+\varepsilon) < 1$ gives $\frac{\ln(2/\delta)}{\ln(1+\varepsilon)} > \frac{\ln(2(1+\varepsilon))}{\ln(1+\varepsilon)} = 1 + \frac{\ln 2}{\ln(1+\varepsilon)} > 2$ for $\varepsilon \in (0,1)$, so $M = \lceil \frac{\ln(2/\delta)}{\ln(1+\varepsilon)} \rceil + 1 \ge 4$.} Since $\ln(1 + \varepsilon) \ge \varepsilon / 2$ for $\varepsilon \in (0, 1)$, we have $M \le \frac{2 \ln(2/\delta)}{\varepsilon} + 2 \le \frac{4 \ln(2/\delta)}{\varepsilon}$ (as $\delta(1+\varepsilon) < 1$ implies $\frac{\ln(2/\delta)}{\varepsilon} > \frac{\ln(2(1+\varepsilon))}{\varepsilon} \ge \ln 4 > 1$), so $3 \sqrt{t \ln M} \le C_{\varepsilon, \delta} \sqrt{t}$, establishing~\eqref{eq:epoch_regret_bound}.
\end{proof}

\begin{lemma}\label{lem:main_anytime}
For any $\varepsilon \in (0, 1)$ and $\delta \in (0, 1)$ satisfying $\delta (1 + \varepsilon) < 1$, the cumulative regret of the master algorithm at every horizon $T \ge 1$ satisfies
\begin{equation}\label{eq:main_anytime_bound}
    R_T \le \frac{\sqrt{1 + \varepsilon} + C_{\varepsilon, \delta} \sqrt{\frac{2}{\ln n}}}{1 - \sqrt{\delta (1 + \varepsilon)}} \sqrt{\frac{T \ln n}{2}},
\end{equation}
where $C_{\varepsilon, \delta}$ is defined in Lemma~\ref{lem:epoch_regret}.
\end{lemma}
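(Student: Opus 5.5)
We argue by strong induction on $T$, proving the per-expert form: for every $j\in[n]$, $R_{T,j}\le B\sqrt{T}$, where
\[
    B := \frac{\sqrt{1+\varepsilon}\sqrt{\frac{\ln n}{2}} + C_{\varepsilon,\delta}}{1-\sqrt{\delta(1+\varepsilon)}}.
\]
Multiplying numerator and denominator appropriately shows that $B\sqrt{T}$ equals the right-hand side of~\eqref{eq:main_anytime_bound}, so taking the maximum over $j$ gives the claim. Working with a fixed comparator $j$ is essential: the prefix and the epoch must be compared against the \emph{same} base expert. We cannot compare the prefix against its own best expert, since the two best experts may differ.

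For the inductive step, fix $T$ and $j$, and let $m$ and $t_0 = t_{\mathrm{start}}(m)$ be given by Lemma~\ref{lem:active_existence} applied at $t = T$. Split the regret as
\[
    R_{T,j} = \sum_{s=1}^{t_0-1}\bigl(\langle p_s,\ell_s\rangle-\ell_{s,j}\bigr) + \sum_{s=t_0}^{T}\bigl(\langle p_s,\ell_s\rangle-\ell_{s,j}\bigr).
\]
Lemma~\ref{lem:epoch_regret} bounds the second sum by $\bigl(\sqrt{1+\varepsilon}\sqrt{\ln n/2} + C_{\varepsilon,\delta}\bigr)\sqrt{T}$.

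The first sum is exactly $R_{t_0-1,j}$, which is empty (zero) if $t_0 = 1$. Otherwise $t_0 - 1 \ge 1$, and since $\delta(1+\varepsilon)<1$, Lemma~\ref{lem:active_existence} gives $t_0-1 \le \delta(1+\varepsilon)T < T$. The induction hypothesis then yields
\[
    R_{t_0-1,j} \le B\sqrt{t_0-1} \le B\sqrt{\delta(1+\varepsilon)}\,\sqrt{T}.
\]

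Adding the two bounds gives
\[
    R_{T,j} \le \Bigl[\sqrt{1+\varepsilon}\sqrt{\tfrac{\ln n}{2}}+C_{\varepsilon,\delta} + B\sqrt{\delta(1+\varepsilon)}\Bigr]\sqrt{T} = B\sqrt{T}.
\]
The equality is precisely the fixed-point equation defining $B$, which closes the induction.

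The base case $T=1$ is covered by the same argument: there $t_0 = 1$, so the prefix is empty and Lemma~\ref{lem:epoch_regret} alone suffices. No step appears to be a real obstacle. The only subtlety is making sure that the quantity inducted on is $R_{t,j}$ for each fixed $j$, rather than $R_t = \max_i R_{t,i}$. This causes no difficulty because the prefix bound $R_{t_0-1,j} \le R_{t_0-1} \le B\sqrt{t_0-1}$ holds uniformly in $j$.
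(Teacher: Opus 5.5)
Your proposal is correct and matches the paper's proof: strong induction on $T$, a split at $t_0 = t_{\mathrm{start}}(m)$, the prefix bounded by the inductive hypothesis scaled by $\sqrt{\delta(1+\varepsilon)}$, the epoch bounded by Lemma~\ref{lem:epoch_regret}, and the fixed-point identity for the constant to close the induction. One quibble: your aside that one ``cannot'' compare the prefix against its own best expert is wrong, since $\max_i (a_i + b_i) \le \max_i a_i + \max_i b_i$ makes best-expert regret subadditive over consecutive intervals, and your own step $R_{t_0-1,j} \le R_{t_0-1}$ does exactly that comparison; the remark does not affect the argument.
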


\begin{proof}
Let $\alpha := \frac{\sqrt{1 + \varepsilon} + C_{\varepsilon, \delta} \sqrt{\frac{2}{\ln n}}}{1 - \sqrt{\delta (1 + \varepsilon)}}$. We prove by induction on $T \ge 1$ that $R_T \le \alpha \sqrt{\frac{T \ln n}{2}}$. Let $m \ge 1$ be the expert from Lemma~\ref{lem:active_existence} with $H^{m-1} \le T \le H^m$, and set $t_0 := t_{\mathrm{start}}(m)$. For any base expert $j \in [n]$, splitting $[1, T]$ into $[1, t_0 - 1]$ and $[t_0, T]$ gives
\[
    R_{T, j}
    = \sum_{s = 1}^{t_0 - 1} \left( \langle p_s, \ell_s \rangle - \ell_{s, j} \right)
    + \sum_{s = t_0}^T \left( \langle p_s, \ell_s \rangle - \ell_{s, j} \right).
\]
If $t_0 = 1$, the first sum is empty ($0$). If $t_0 > 1$, then by the inductive hypothesis applied at horizon $t_0 - 1 < T$ and Lemma~\ref{lem:active_existence} ($t_0 - 1 \le \delta (1 + \varepsilon) T$),
\[
    \sum_{s = 1}^{t_0 - 1} \left( \langle p_s, \ell_s \rangle - \ell_{s, j} \right)
    \le R_{t_0 - 1}
    \le \alpha \sqrt{\frac{(t_0 - 1) \ln n}{2}}
    \le \alpha \sqrt{\delta (1 + \varepsilon)} \sqrt{\frac{T \ln n}{2}}.
\]
For the second sum, applying Lemma~\ref{lem:epoch_regret} at time $t = T$ and writing $\sqrt{T} = \sqrt{\frac{2}{\ln n}} \sqrt{\frac{T \ln n}{2}}$ gives
\[
    \sum_{s = t_0}^T \left( \langle p_s, \ell_s \rangle - \ell_{s, j} \right)
    \le \sqrt{1 + \varepsilon} \sqrt{\frac{T \ln n}{2}} + C_{\varepsilon, \delta} \sqrt{T}
    = \left( \sqrt{1 + \varepsilon} + C_{\varepsilon, \delta} \sqrt{\frac{2}{\ln n}} \right) \sqrt{\frac{T \ln n}{2}}.
\]
Adding the two bounds and using the definition of $\alpha$, which satisfies $\sqrt{1 + \varepsilon} + C_{\varepsilon, \delta} \sqrt{\frac{2}{\ln n}} = \alpha \bigl( 1 - \sqrt{\delta (1 + \varepsilon)} \bigr)$, we obtain
\begin{align*}
    R_{T, j}
    &\le \alpha \sqrt{\delta (1 + \varepsilon)} \sqrt{\frac{T \ln n}{2}} + \left( \sqrt{1 + \varepsilon} + C_{\varepsilon, \delta} \sqrt{\frac{2}{\ln n}} \right) \sqrt{\frac{T \ln n}{2}} \\
    &= \left( \alpha \sqrt{\delta (1 + \varepsilon)} + \alpha \left( 1 - \sqrt{\delta (1 + \varepsilon)} \right) \right) \sqrt{\frac{T \ln n}{2}} \\
    &= \alpha \sqrt{\frac{T \ln n}{2}}.
\end{align*}
Taking the maximum over $j \in [n]$ completes the induction and establishes~\eqref{eq:main_anytime_bound}.
\end{proof}

\begin{proof}[Proof of Theorem~\ref{thm:main_intro}]
The bound is asymptotic in $n$, so we may assume $\ln n > 10$ (for $\ln n \le 10$ the algorithm may be run with $\varepsilon = \delta = 1/4$). Set
\[
    \varepsilon := \sqrt{\frac{\ln \ln n}{\ln n}}, \qquad \delta := \varepsilon^3 .
\]
Since $x \mapsto (\ln x)/x$ is decreasing for $x > e$, we have $\varepsilon^2 = \frac{\ln \ln n}{\ln n} < \frac{\ln 10}{10} < \frac{1}{4}$, so $\varepsilon < \frac{1}{2}$ and $\delta < \frac{1}{8}$; hence $\delta(1 + \varepsilon) < \frac{3}{16} < 1$ and the pair $(\varepsilon, \delta)$ is admissible in Lemma~\ref{lem:main_anytime}. The master algorithm of that lemma uses no knowledge of the horizon and satisfies $R_T \le \alpha(n) \sqrt{\frac{T \ln n}{2}}$ for every $T \ge 1$ and every loss sequence, where $\alpha(n)$ depends only on $n$ and is given by
\[
    \alpha(n) := \frac{\sqrt{1 + \varepsilon} + C_{\varepsilon, \delta} \sqrt{\frac{2}{\ln n}}}{1 - \sqrt{\delta (1 + \varepsilon)}} .
\]

It remains to show that $\alpha(n) = 1 + O(\varepsilon)$. Since $\sqrt{\delta(1 + \varepsilon)} = \varepsilon^{3/2} \sqrt{1 + \varepsilon}$, subtracting $1$ from $\alpha(n)$ and dividing by $\varepsilon$ gives
\[
    \frac{\alpha(n) - 1}{\varepsilon}
    = \frac{1}{\varepsilon} \cdot \frac{\bigl(\sqrt{1 + \varepsilon} - 1\bigr) + \varepsilon^{3/2} \sqrt{1 + \varepsilon} + C_{\varepsilon, \delta} \sqrt{\frac{2}{\ln n}}}{1 - \varepsilon^{3/2} \sqrt{1 + \varepsilon}} .
\]
Now let $n \to \infty$, so that $\varepsilon \downarrow 0$. Dividing by $\varepsilon$, the first numerator term satisfies $\bigl(\sqrt{1 + \varepsilon} - 1\bigr)/\varepsilon \to \frac{1}{2}$ and the second satisfies $\varepsilon^{1/2} \sqrt{1 + \varepsilon} \to 0$. For the third, substituting $\delta = \varepsilon^3$,
\[
    C_{\varepsilon, \delta}
    = 3 \sqrt{\ln \left( \frac{4 \ln(2/\delta)}{\varepsilon} \right)}
    = 3 \sqrt{\ln \frac{1}{\varepsilon} + \ln \left( 4 \ln 2 + 12 \ln \frac{1}{\varepsilon} \right)}
    = \bigl(3 + o(1)\bigr) \sqrt{\ln \frac{1}{\varepsilon}} ,
\]
the last step because $\ln\bigl(4 \ln 2 + 12 \ln \frac{1}{\varepsilon}\bigr) = O\bigl(\ln \ln \frac{1}{\varepsilon}\bigr) = o\bigl(\ln \frac{1}{\varepsilon}\bigr)$. By the definition of $\varepsilon$,
\[
    \ln \frac{1}{\varepsilon} = \frac{1}{2} \ln \frac{\ln n}{\ln \ln n} = \frac{1}{2} \bigl( \ln \ln n - \ln \ln \ln n \bigr) = \left( \frac{1}{2} + o(1) \right) \ln \ln n ,
\]
so $C_{\varepsilon, \delta} = \bigl(\frac{3}{\sqrt{2}} + o(1)\bigr)\sqrt{\ln \ln n}$, and therefore
\[
    C_{\varepsilon, \delta} \sqrt{\frac{2}{\ln n}}
    = \bigl(3 + o(1)\bigr) \sqrt{\frac{\ln \ln n}{\ln n}}
    = \bigl(3 + o(1)\bigr) \varepsilon ,
\]
and the third term divided by $\varepsilon$ tends to $3$. The denominator tends to $1$, so $\bigl(\alpha(n) - 1\bigr)/\varepsilon \to \frac{1}{2} + 3 = \frac{7}{2}$. In particular
\[
    \alpha(n) = 1 + O(\varepsilon) = 1 + O\left( \sqrt{\frac{\ln \ln n}{\ln n}} \right) ,
\]
which is the bound asserted in Theorem~\ref{thm:main_intro}.
\end{proof}

\section*{Acknowledgements}
CL would like to thank Victor Portella and Nicholas Harvey for much fruitful discussion on this problem.

\bibliographystyle{plainnat}
\bibliography{refs}

\end{document}